\documentclass[runningheads]{llncs}
\usepackage[T1]{fontenc}
\usepackage{graphicx}
\usepackage{xcolor}
\usepackage[ruled, vlined, linesnumbered]{algorithm2e}
\usepackage{amsmath}
\usepackage{amssymb}
\usepackage{mathtools}
\usepackage[caption=false,font=footnotesize]{subfig}

\SetCommentSty{mycommfont}

\begin{document}
\title{Scalable Multi-Agent Maze Traversal \\ with Local Communication}
%
%
\author{Julian Rau\inst{1}\orcidID{0009-0006-0913-6555} \and
    Jahir Argote-Gerald\inst{2}\orcidID{0009-0003-1492-1329} \and
    Grace McFassel\inst{1}\orcidID{0000-0002-8568-1313} \and 
    Genki Miyauchi\inst{3}\orcidID{0000-0002-3349-6765} \and
    Paul Trodden\inst{2}\orcidID{0000-0002-8787-7432} \and
    Roderich Groß\inst{1,2}\orcidID{0000-0003-1826-1375}
}
\authorrunning{J. Rau et al.}
%
\institute{Department of Computer Science, Technical University of Darmstadt, Germany {\tt\small \{julian.rau, grace.mcfassel, roderich.gross\}@tu-darmstadt.de}\and
    School of Electrical and Electronic Engineering, The University of Sheffield, UK {\tt\small \{jaargotegerald1,  p.trodden\}@sheffield.ac.uk}\and
    Bristol Robotics Laboratory, University of Bristol, Bristol, UK {\tt\small g.miyauchi@bristol.ac.uk}}
\maketitle              
\begin{abstract}
Cave networks, pipe systems, and similar maze-like environments pose significant challenges for multi-agent navigation in unknown settings with limited communication. We propose a distributed algorithm that enables agents to collectively traverse an unknown, possibly cyclic graph. Agents enter sequentially at a designated start node and are tasked to localize and reach an undisclosed goal while avoiding collisions. They coordinate via local communication using leader–follower relationships and leader switching. At any moment in time, exploration is performed by only one of the agents, which runs a single-agent maze solver. We prove that the algorithm is complete, that its makespan is asymptotically equivalent (in the number of agents) to that of an optimal full-knowledge strategy, and derive its time and space complexity. Simulations with up to $625$ agents show a decreasing average sum-of-fuels as the number of agents increases and demonstrate that the proposed approach outperforms a naïve baseline in which all agents independently execute the single-agent solver.

\keywords{Multi-Agent Maze Traversal \and 
Multi-Agent Systems and Distributed Robotics \and Algorithmic Completeness and Complexity
}
\end{abstract}

\section{Introduction}
\label{sec:introduction}
Many real-world scenarios require multiple agents to navigate efficiently through complex, maze-like environments such as networks of pipes~\cite{parrott2020simulation,ZHANG2023pipeinspection} or caves~\cite{murphy2009mobile,martz2020survey,caveexploration}.
Coordinating such agents poses unique challenges, including 
the uncertainty of environments,
communication restrictions, 
and inter-agent collisions.

Environments are often modeled as graphs. If the graph is unknown a priori, its nodes and edges are locally revealed as agents traverse it. For single-agent systems, classical methods such as breadth-first search (BFS) and Tr\'emaux's algorithm can be employed to search the graph~\cite{randomwalkruntime,cormen2022introduction,even2011graph}.
For multi-agent system, inter-agent collisions and communication have to be considered. Some studies consider agents that explore (tree or) graph environments while coordinating through beacons they place at  nodes~\cite{BrassMultiExploration,cabrera2012flooding,fraigniaud2006collective,nagavarapu2016multi} or through local or global communication~\cite{dereniowski2015fast,linardakis2024}. Except for~\cite{linardakis2024}, these studies provide theoretical guarantees such as completeness and exploration-time bounds but do not all consider inter-agent collisions~\cite{BrassMultiExploration,dereniowski2015fast,fraigniaud2006collective}. Moreover, their focus is on exhaustive exploration rather than reaching specific goal nodes. The search algorithm proposed in~\cite{crnkovic2023fast} aborts in the event undisclosed goal nodes are discovered. However, it is centralized.

A related problem is Multi-Agent Path Finding (MAPF), which involves identifying collision-free paths of multiple agents to designated goal nodes in a known graph environment~\cite{sharon2015conflict,silver2005cooperative,SternSoCS19,lavalleMAPF}. In classical MAPF settings, agents are assumed to have full knowledge of the graph, their own start and goal nodes, and those of other agents. Planning is typically conducted in a centralized manner~\cite{sharon2015conflict,silver2005cooperative,lavalleMAPF}. Some MAPF variants relax these assumptions by requiring agents to discover parts of their environment or plan their paths in a distributed manner~\cite{hall2020self,nebel2019implicitly,queffelec2023complexity,shofer2023multi}, but do not permit fully unknown environments.

Multi-Agent Maze Traversal (MAMT)~\cite{kivelevitch2010multi,Jahir} is a problem that combines multiple of the aforementioned challenges. Specifically, it concerns collective, collision-free navigation of arbitrary large group of agents (i) through a unknown graph environment that is only gradually revealed through local sensing as the agents move, and (ii) towards a common, undisclosed goal node.
In~\cite{kivelevitch2010multi}, a distributed approach for MAMT is presented but 
assumes global communication among agents. In~\cite{Jahir}, we proposed a distributed approach requiring only local communication. It employs a leader-switching strategy 
by which one agent (the \emph{head}) executes a single-agent maze solver to search for the goal, while all other agents choose neighboring agents to follow, thereby 
directly or indirectly following the head. If movement were to result in a collision, the head instead passes its role to the respective agent.
Both approaches are limited to acyclic graph environments (i.e., trees) and offer no theoretical guarantees~\cite{kivelevitch2010multi,Jahir}, leaving decentralized, locally coordinated MAMT for general graphs an open problem.

The contributions of this work are as follows: (i) We propose a new MAMT algorithm, based on the one presented in~\cite{Jahir}, to address the MAMT problem for general (i.e. possibly cyclic) graphs. The algorithm manages competition between agents and maintains connected communication within the swarm. It employs multiple messaging rounds per time step and diverse message types. (ii) We formally prove completeness and that the makespan is asymptotically equivalent (with respect to the number of agents) to the one achieved by an optimal strategy assuming full knowledge.
(iii) We derive the algorithm's time and space complexity.
(iv) We validate the algorithm in simulation with up to $625$ agents, showing a decrease in average sum of fuel with growing numbers of agents, and that the algorithm 
outperforms a naïve baseline (where all agents run the single-agent maze solver) and approaches the performance of the full-knowledge strategy.

Sections~\ref{sec:problemformulation} and~\ref{sec:algorithm} define the MAMT problem and 
propose and analyze an algorithmic solution.
Sections~\ref{sec:results} and~\ref{sec:conclusion} present simulation results and conclusions.

\section{Problem Formulation}
\label{sec:problemformulation}
\begin{figure}[t]
  \centering
  \subfloat[]{
        \includegraphics[width=.27\columnwidth]{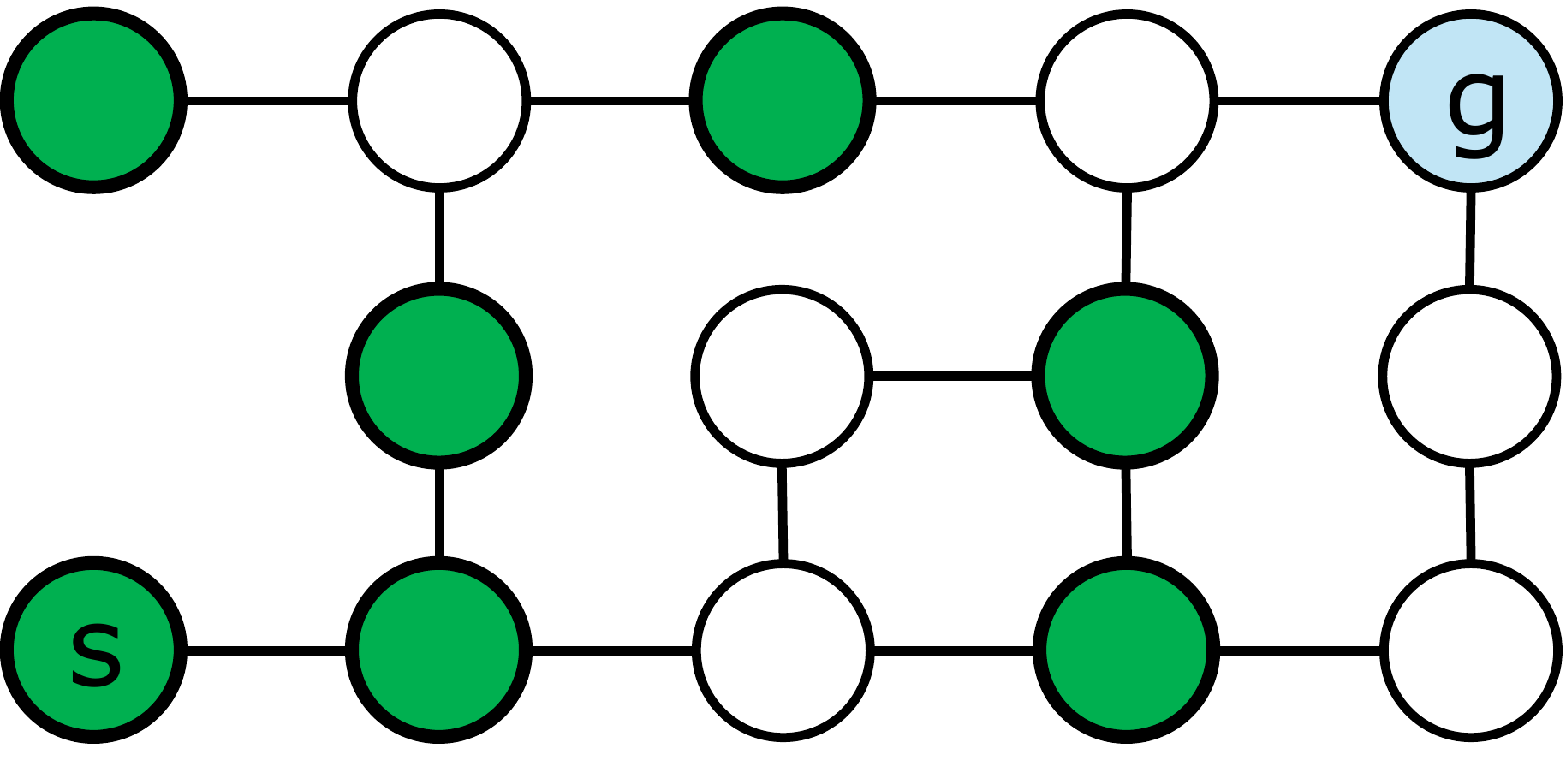}
        \label{fig:mazeenv}
    }\hfill
    \subfloat[]{
        \includegraphics[width=.27\columnwidth]{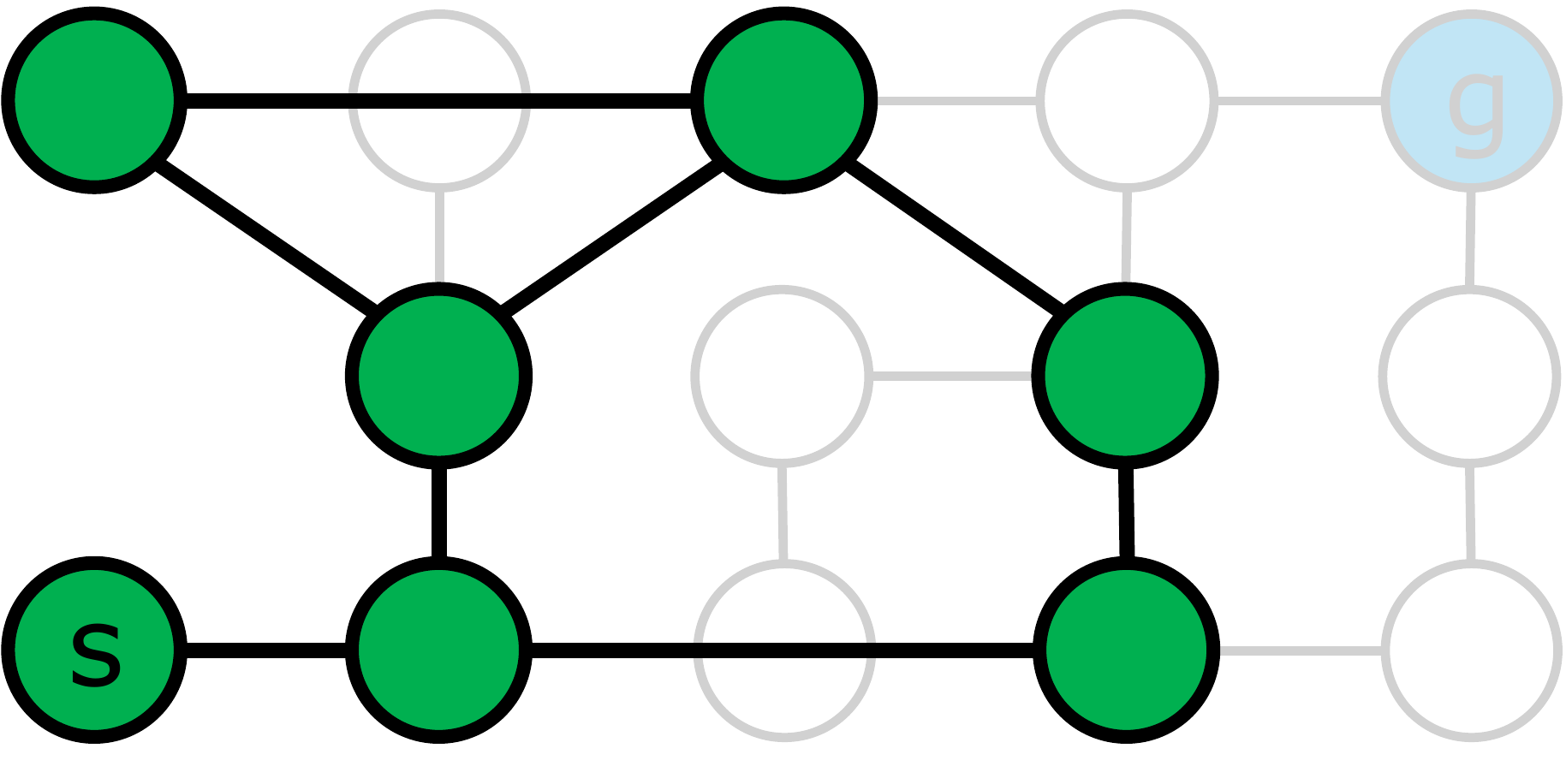}
        \label{fig:mazecoms}
    }\hfill
    \subfloat[]{
        \includegraphics[width=.27\columnwidth]{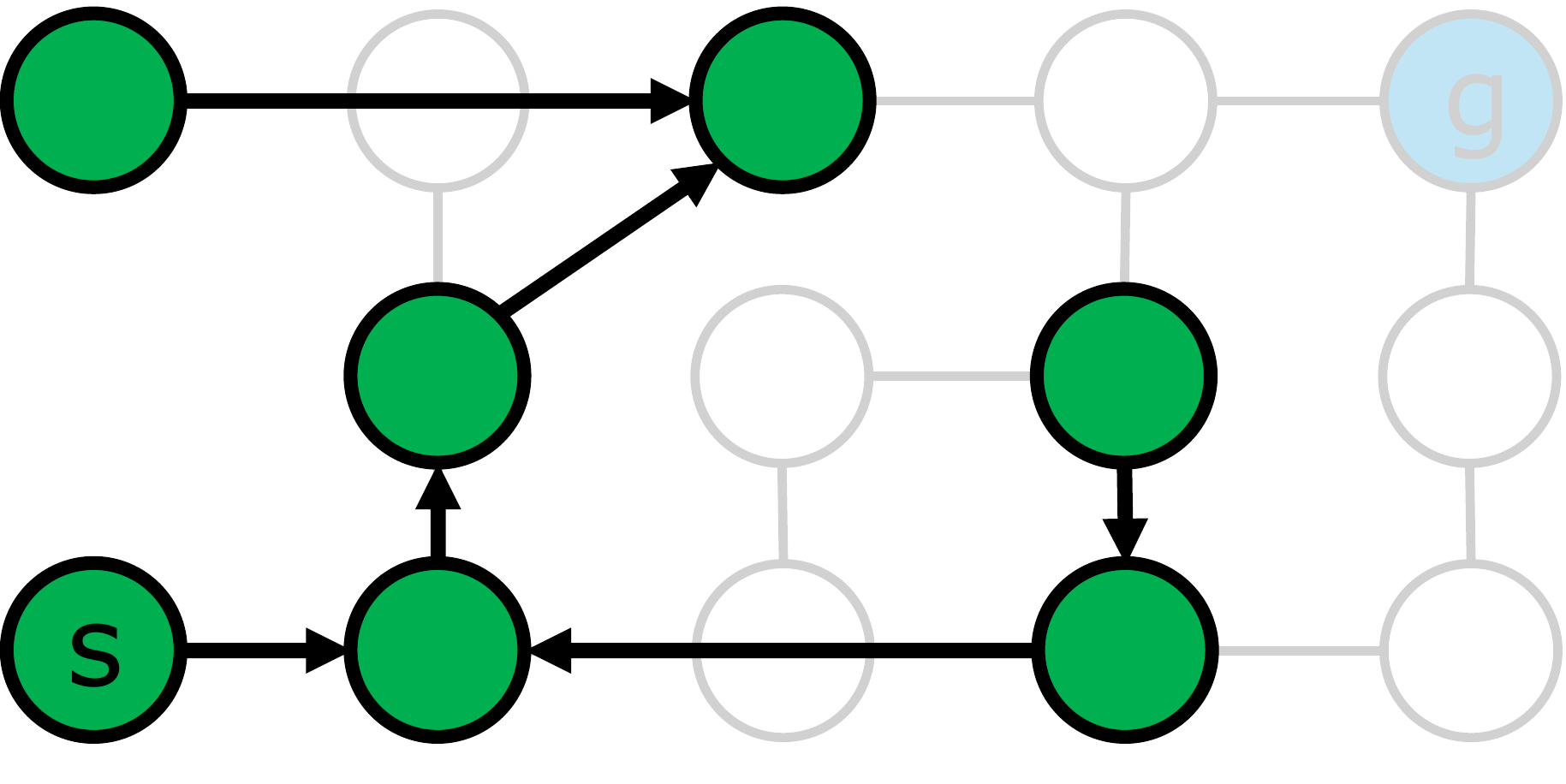}
        \label{fig:mazelead}
    }
  \caption{(a) The maze is represented as a graph with start node $s$, goal node $g$ (blue). Agents (green) are tasked with moving from $s$ to $g$ without prior knowledge of the environment. (b) Agents can only communicate with other agents that are either in an adjacent node or two nodes away with an empty node in between. (c) All agents but one choose a leader to follow, resulting in a leader-follower relation. 
  }
  \label{fig:problemformulation}
\end{figure}
The maze is represented as a connected, undirected graph $\mathcal{G} = (\mathcal{V}, \mathcal{E})$, where 
$\mathcal{V}$ is the set of nodes and $\mathcal{E}$ is the set of edges. An example graph is shown in Fig.~\ref{fig:mazeenv}. Two nodes in this graph, $s,g \in \mathcal{V}$, $s \neq g$, represent the start and the goal (blue), respectively. Node $s$ is assumed to be a leaf in $\mathcal{G}$\footnote{As graphs representing real-world environments are artificially created, we assume they can be constructed accordingly.}.

A set of $n$ agents, $\mathcal{A} = \{1, \dots , n\}$, is considered, where $v_i[k]\in \mathcal{V}$ denotes the node that agent $i\in\mathcal{A}$ resides on at time $k \in \mathbb{N}_0$. We define time step $k + 1$ as the interval $[k,k+1)$. 
For simplicity, we omit $k$ when it is clear from the context.
At time $k=0$, all agents are at the start (i.e., $\forall i: v_i[0]=s$) and one agent is considered \emph{active} whereas all others are \emph{inactive}. Active agents remain active indefinitely. 
At the precise time an active agent leaves the start node, an inactive agent present at the same node (if any) becomes active.
Activation repeats until the last agent becomes active and occurs in a fixed, predetermined total order $\leq_\mathcal{A}$ on $\mathcal{A}$. Let $\mathcal{A}^{\text{active}}[k] \subseteq \mathcal{A}$ denote the set of active agents at time $k$.

Let $\mathcal{N}_i[k] = \left\{u \in \mathcal{V} \mid \{v_i,u\} \in \mathcal{E}\right\}$ be the set of adjacent nodes of agent $i$.
In time step $k+1$, agent $i$ can either move to a node in $\mathcal{N}_i[k]$ or remain in its current node $v_i[k]$. 
A node $v \in \mathcal{V}\setminus\{g\}$ is occupied at time $k$, if $\exists i \in \mathcal{A}^{\text{active}}[k]: \, v_i[k]=v$.
A \emph{vertex conflict}~\cite{SternSoCS19} occurs at time $k$ if two active agents occupy the same non-goal node, that is, $\exists i,j\in \mathcal{A}^{\text{active}}[k],i \neq j: v_i[k] = v_j[k]
\neq g$.  
A \emph{following conflict}~\cite{SternSoCS19} occurs if an active agent moves to a non-goal node in the same time step that another active agent left it, formally, $\exists i,j\in \mathcal{A}^{\text{active}}[k]$, $i \neq j: v_i[k] = v_j[k+1] \neq g$. Both vertex and following conflicts are prohibited.
Note that an agent $i$ that activates in timestep $k+1$ is not yet active at $k$ (i.e., $i \notin \mathcal{A}^{\text{active}}[k]$) and therefore does not induce a following conflict with the active agent $j \in \mathcal{A}^{\text{active}}[k]$ that left the start node in time step $k+1$. 

At time $k$, agent $i$ knows\footnote{For ease of notation, we write  $\mathcal{N}_i$ as if agent $i$ had access to the global labels of neighboring nodes, $\mathcal{V}$. However, agent $i$ can only distinguish these nodes, requiring labels that are consistent from agent $i$'s local perspective.} (adapted from~\cite{Jahir}):
\begin{itemize}
    \item its unique \emph{ID} $i$
    \item whether the node $v_i[k]$ it currently resides on is the  
    goal node, $v_i[k]=g$
    \item if $g$ is an adjacent node, that is, if $g \in \mathcal{N}_i[k]$, and if so, which edge leads to $g$
    \item the subset of adjacent nodes that are currently occupied, $\mathcal{N}_i^{\text{occupied}}[k] = \left\{u \in \mathcal{N}_i[k]\setminus \{g\} \mid \exists a \in \mathcal{A}^{\text{active}}[k]\colon v_a[k] = u\right\}$. The subset of unoccupied adjacent nodes can be obtained by $\mathcal{N}_i^{\text{unoccupied}} = \mathcal{N}_i \setminus \mathcal{N}_i^{\text{occupied}}$.
\end{itemize}

Messages among agents are limited to a two-hop distance and can only pass unoccupied nodes\footnote{This models constrained local communication, such as line-of-sight communication.} (example shown in Fig.~\ref{fig:problemformulation}b), resulting in the undirected communication graph $\mathcal{G}^{\text{com}}=(\mathcal{V}^{\text{com}}, \mathcal{E}^{\text{com}})$, where $\mathcal{V}^{\text{com}} = \mathcal{A}^{\text{active}}$ and \cite{Jahir}\footnotemark
\begin{equation*}
\begin{aligned}
     \mathcal{E}^{\text{com}} =& \Big\{\{i,j\} \subseteq \mathcal{A}^{\text{active}} \mid i \neq j \land \\
     & \left(v_i = v_j \lor \{v_i, v_j\} \in \mathcal{E} 
     \lor \left(\mathcal{N}_i^{\text{unoccupied}} \cap \mathcal{N}_j^{\text{unoccupied}} \neq \emptyset\right)\right)\Big\}.
\end{aligned}
\end{equation*}
Consequently, agent $i$ can communicate with agents in its communication range $\mathcal{C}_i = \left\{j \in \mathcal{V}^{\text{com}}\setminus \{i\} \mid \{i,j\} \in \mathcal{E}^{\text{com}}\right\}$~\cite{Jahir}.
\footnotetext{The expression $v_i = v_j$ in $\mathcal{E}^{\text{com}}$ covers the case where two active agents reside in $g$.}
Agents can send a message via directed cast, that is, via a specified edge.
Formally, when agent $i$ sends a message via directed cast along edge $\left\{v_i, u\right\} \in \mathcal{E}$, $u \in \mathcal{N}_i$, this message reaches all agents that currently reside in any node of the set 
\begin{equation*}
    \mathcal{U}_i[u]\coloneq\left\{w \in \mathcal{V} \mid w = u \lor \left(u \in \mathcal{N}_i^{\text{unoccupied}} \land \{u,w\} \in \mathcal{E} \right)\right\}.
\end{equation*}
Agents can also send a message via (local) broadcast, that is, along all outgoing edges of their current node~\cite{Jahir}. A broadcast message sent by agent $i$ reaches all agents currently residing in any node of set 
$\mathcal{B}_i\coloneq \bigcup_{u \in \mathcal{N}_i} \mathcal{U}_i[u].$
A message contains the information whether it was sent via directed cast or broadcast. Agents can send and receive messages at any time.

Communication is situated, allowing agent $i$ to determine the node(s) from which a message reached the agent. 
An agent $i$ that receives a directed cast message from an agent $j\neq i$ via a shared, unoccupied, adjacent node $u$ can determine that this directed cast came from $u$. If $j$ is adjacent to $i$ and sends a directed cast message via the edge $\{v_i,v_j\} \in \mathcal{E}$ or $v_i=v_j$, $i$ knows that this message came from $v_j$.
An agent $i$ that receives a broadcast message that was sent by an agent $j \neq i$, gathers the nodes from which the broadcast reached $i$ in the set \texttt{NodesTowards}$_i(j)$ (adapted from~\cite{Jahir}).
Formally, 
\begin{equation*}
\texttt{NodesTowards}_i(j) =
\begin{cases}
\left\{v_j\right\}, 
& \text{if } v_j = v_i
\\
\left\{v_j\right\} \cup (\mathcal{N}^{\text{unoccupied}}_i \cap \mathcal{N}^{\text{unoccupied}}_j),
& \text{if } \{v_i,v_j\} \in \mathcal{E}
\\
\mathcal{N}^{\text{unoccupied}}_i \cap \mathcal{N}^{\text{unoccupied}}_j, 
& \text{otherwise}.
\end{cases}
\end{equation*}

All agents are tasked to reach the goal $g$. 
The following two evaluation criteria are used to measure the performance~\cite{SternSoCS19,Jahir}:
(i)  
\emph{Makespan}: The overall time it took for all agents to reach the goal, formally $\min\left\{k \in \mathbb{N}_0 \mid \forall a \in  \mathcal{A}: v_a[k] = g\right\}$;
(ii)  
\emph{Average sum-of-fuels}: The mean number of edges traversed to reach the goal, formally
    $1/n \sum_{i=1}^{n} \pi_i$, where $\pi_i$ is the number of edges traversed by agent $i$.

\begin{algorithm}[h!]
\DontPrintSemicolon
\caption{Multi-Agent Maze Traversal Algorithm}\label{mainalgorithm}

$v \gets s$, $v^{prev} \gets s$, $L \gets \texttt{nil}$\; \label{initbecomehead}\label{initat_v}\label{initat_v_old}
{\tcp{\textcolor{gray}{Synchronized sensing and messaging step}}}
Sense adjacent nodes\; \label{initsense}
Broadcast and receive \emph{status messages} from neighbor agents, Update $\mathcal{C}$\; \label{initcomms}  \label{algo:comagentsinit}
\uIf{\emph{status message} received from agent $l$}{ \label{initmessagereceived}
    $L \gets l$ \;\label{initsetleader}
}
\While{$v \ne g$}{
\label{rec1}
$D \gets v$, $competing \gets \texttt{false}$    \;\label{initD}\label{initcompeting}
{\tcp{\textcolor{gray}{Decision making step}}}
\uIf{$L = \text{nil}$}{ \label{starthead}
    {\tcp{\textcolor{gray}{Head decision step}}}
    \uIf{$g \in\mathcal{N}$}{ \label{headgoaladjacent}
    $D \gets g$ \; \label{headmovetogoal}
    }
    \Else{
        $D^{\text{solver}} \gets$ \texttt{SingleAgentMazeSolver}()\; \label{solver}
    \uIf{$D^{\text{solver}} \in \mathcal{N}^{occupied}$}{ \label{algo:Dsolveroccupied}
        $L \gets \texttt{Selector}(\{ a \in \mathcal{C} \mid D^{\text{solver}} \in \texttt{NodesTowards}(a)\})$ \;\label{algo:choosenewhead}
    }
    \ElseIf{$D^{\text{solver}} \neq D$}{\label{algo:keephead}
        Directed cast along $\{v, D^{\text{solver}}\}$ \; \label{algo:unicastalongdsolver}
        $D \gets D^{\text{solver}}$ \; \label{movetoDsolver}
    }
    }
    Broadcast \emph{head message}\;\label{PropagateDsolverHead}
} 

\Else{
\label{startnonhead}
        $D, L \gets \texttt{ResolveConflict}()$\; \label{checkconflict}

}\label{endnonhead}

    {\tcp{\textcolor{gray}{Synchronized movement step}}}
    \textbf{move to} $D$\; \label{move}
    $v^{prev} \gets v$, $v \gets D$ \; \label{updateVVprev}
    {\tcp{\textcolor{gray}{Synchronized sensing and messaging step}}}
    Sense adjacent nodes\; \label{sensealg1}
    \uIf{$v^{prev} \neq v$}{
    Directed cast along $\{v, v^{prev}\}$\; \label{unicastsendinfo}
    }
    Broadcast \emph{status messages} \; \label{broadcastsendinfo}
    Receive directed casts (optional) and \emph{status messages} \;
    Update $v^{prev}_L$, $\mathcal{C}$ \; \label{updateC} 
} 

\end{algorithm} 

\begin{algorithm}[h!]
\DontPrintSemicolon
\caption{ResolveConflict (as performed by agent $i$)}\label{alg:ResolveConflict}

\KwOut{The node to move to $D$ and the updated leader $L$}

\uIf{$\{ h \in \mathcal{C} \mid L_h = \textnormal{\texttt{nil}} \land v_h \neq g\} \neq \emptyset$}{ 
    \label{algo2:headincommunicationrange}
    $h \gets \texttt{Selector}(\{ h \in \mathcal{C} \mid L_h = \texttt{nil} \land v_h \neq g\})$ \;\label{algo2:extracthead} 
    {\tcp{\textcolor{gray}{Wait for head decision step to finish}}}
    Wait until directed cast (optional) and \emph{head message} from $h$ received \; \label{algo2:waitforheadmessage}
\uIf{\emph{head message} contains $L_{h} = i$}{\label{algo2:headtransferchosen}
    
    Broadcast \emph{competing message}  \;\label{algo2:headtransferbroadcastcompeting}
    \Return{$v, \textnormal{\texttt{nil}}$}\label{algo2:headtransferreturnvnil}
}

\uElseIf{$\text{Directed cast from $h$ received} \lor 
(
\textnormal{\texttt{NodesTowards}}(h)\cap \mathcal{N}^{\text{occupied}} \neq \emptyset)$}
{\label{algo2:headadjacentorunicast}
   
    Broadcast \emph{competing message} \;
    \Return{$v$, $h$} \;   \label{algo2:returnheadadjacentorunicast}
}
}
\uIf{$g \in\mathcal{N}$}{ \label{algo2:goaladjacent}
    
    Broadcast \emph{competing message} \;\label{algo2:goaladjacentbroadcastcompeting} 
        \Return{$g$, $L$} \label{algo2:goaladjacentreturn}
    }

    \uIf{\textnormal{\texttt{NodesTowards}}$(L) \cap \mathcal{N}^{occupied} \neq \emptyset$}{ \label{algo2:leaderadjacent}
    
    Broadcast \emph{competing message}  \;\label{algo2:leaderadjacentbroadcastcompeting}
        \Return{$v$, $L$} \label{algo2:leaderadjacentreturn}
    }

    $competing \gets \texttt{true}$ {\tcp*{\textcolor{gray}{Compete for $v^{prev}_L$}}} \label{algo2:competingin}
    Broadcast \emph{competing message} \; \label{algo2:broadcastcompeting}
    Wait until \emph{competing messages} from all agents $a \in \mathcal{C}$ received\; \label{algo2:waitforcompeting}

    $\mathcal{R} \gets \{ a \in \mathcal{C} \mid v_a \ne g \land competing_a \land L_a \in \mathcal{C} \land v^{prev}_{L_a} = v^{prev}_L\}$  \label{alg2:getR} \; 
    
    \uIf{
    \textnormal{\texttt{Selector}}$(\text{$\mathcal{R} \cup \{i\}$}) = i$} { \label{alg2:conditiontokeep}
        \Return $v^{prev}_L$, $L${\tcp*{\textcolor{gray}{Keep leader}}}\label{alg2:keepL}
    }
    \Else{
        \Return $v$, $\texttt{Selector} (\text{$\mathcal{R} \cup \{i\}$})${\tcp*{\textcolor{gray}{Update leader}}}\label{alg2:updateL}
    }

\end{algorithm}

\section{Algorithm Design}
\label{sec:algorithm}
We extend the approach in~\cite{Jahir} to cyclic graphs while preserving its core algorithmic idea. In each time step (until reaching a node adjacent to the goal), one active agent (referred to as the \emph{head}) employs a single-agent maze solver to explore the maze.
All other active agents choose an active agent, their \emph{leader}, to follow such that the resulting leader-follower-graph is a tree; hence, they directly or indirectly follow the head (Fig.~\ref{fig:mazelead}).
All agents are assumed to possess any capabilities that are needed for the underlying single-agent maze solver in addition to the requirements discussed in Section~\ref{subsec:complexity}.

When generalizing this approach to cyclic mazes, additional considerations are required as multiple paths may exist between a pair of nodes.
The proposed algorithm lets agents send a \emph{directed cast} along a single, chosen edge, modeling directional communication along individual corridors. After an agent $i$ moves from any node $v$ to new node $u$, it sends a directed cast along edge $\{u,v\}$. 
This informs its \emph{followers} (i.e., active agents $j$ that have agent $i$ as their leader) about agent $i$'s previous node $v$.
Additionally, the head can send a directed cast to inform any agents in its communication range of its next move.

Similarly to~\cite{Jahir}, the agents can employ broadcast messages. In the present work, three types of broadcast messages are used:
\begin{enumerate}
    \item The \emph{status message} is sent once per time step by each agent $i \in \mathcal{A}^{\text{active}}$ and contains its ID, whether $i$ is at the goal, (i.e., $v_i = g$) and $i$'s leader pointer $L_i \in (\mathcal{A}^{\text{active}}\setminus \{i\}) \cup \{\texttt{nil}\}$.
    \item The \emph{competing message} is sent once per time step by all active non-head agents and contains only their ID and the flag \emph{competing}, which indicates if they intend to move and should therefore be considered competing in potential conflicts for nodes in this time step.
    \item The \emph{head message} is sent once per time step by the current head agent, $h$, containing its ID, the flag \emph{competing}, the state of the single-agent maze solver and information on the head agent at the next time step $L_h$ (either $\texttt{nil}$ if the current head keeps the role, and the ID of another agent otherwise).
\end{enumerate}

The proposed algorithm is shown in Algorithm~\ref{mainalgorithm}, seen from the point of view of the executing agent $i$.  It is executed by all agents in a synchronous, discrete-time manner.
Subscripts $i$ are omitted for better readability.

\emph{Initialization:}
(Line 1) When $i$ first becomes active, it is at node $s$ and has no leader.
(Line 2) Agent $i$ then senses the adjacent node (as $s$ is a leaf) to identify the neighborhood as well as if the adjacent node is the goal or if it is occupied.
(Line 3) If agent $i$ is the first agent activated, it will not receive any messages, resulting in $\mathcal{C} = \emptyset$. If other agents have been activated before $i$, then there exists another active agent $l$ that just left the start node and moved to the single adjacent node. In this case, agent $i$ will receive a status message from $l$, and $\mathcal{C} = {l}$.
(Lines 4--5) If $\mathcal{C} = {l}$, agent $i$ will make $l$ its leader, $L_i=l$. Otherwise, it keeps the leader pointer $L_i = \texttt{nil}$ and assumes the head role.

\emph{While Loop:}
The agent executes the while loop (lines 6--28) until it reaches the goal $g$.
Each iteration of the loop takes one time step.
(Line 7) At the beginning of each time step, $i$ resets its target node $D$ to its current node $v$ and competing to $\texttt{false}$.
\emph{While Loop Head Decisions:}
(Line 8) If $L_i = \texttt{nil}$, then agent $i$ is the head agent.
(Lines 9--10) If the goal is adjacent, $i$ sets its target node $D = g$.
(Line 12) Else, agent $i$ runs one step of the single-agent maze solver to determine an adjacent potential target node $D^{\text{solver}}$.
(Lines 13--14) If $D^{\text{solver}}$ is currently occupied by an agent $j$, $j$ is chosen as the new head and leader of $i$.
(Messages cannot pass through occupied nodes, hence $D^{\text{solver}} \in \texttt{NodeTowards}(a)$ will only be fulfilled for agent $j$, and the \texttt{Selector}\footnote{The \texttt{Selector} operator is any deterministic operator that, given a set of agents $A \subseteq \mathcal{A}$, returns the agent $a \in A$ with $a \leq_\mathcal{A} b$ for all $b \in A\setminus\{a\}$.} operator returns $j$.)
Else, $D^{\text{solver}}$ is either unoccupied or $i$'s current node, and $i$ retains the head role.
(Line 17) If $D^{\text{solver}}$ is unoccupied, agent $i$ sends a directed cast along edge ${v, D^{\text{solver}}}$ and updates its target node $D$ accordingly.
(Line 18) Then, $i$ broadcasts a head message informing all other agents within its communication range whether one of them has been elected as the new head.
\emph{While Loop Follower Decisions:}
(Line 20) If at time $k$ agent $i$ has a leader $L_i$, it executes routine \texttt{ResolveConflict} (discussed below), which is shown in Algorithm~\ref{alg:ResolveConflict}.
\texttt{ResolveConflict} returns a target node $D$ and the potentially updated leader $L_i$.

\emph{Movement:}
(Lines 21-22) After completing its decision step, agent $i$ moves synchronously with all other agents to its target node. It updates $v^{prev}$ and $v$.

\emph{Post-Movement Sensing and Messaging:}
(Lines 23--26) Agent $i$ senses adjacent nodes. If $i$ moved to a new node, it broadcasts a directed cast along $\{v, v^{prev}\}$, and always broadcasts a status message.
(Lines 27--28) Agent $i$ receives directed casts and status messages from nearby agents and updates $v^{prev}_L$ and $\mathcal{C}$ accordingly.
Agents who terminated after reaching the goal in the previous time step will not send a message, and are removed from $\mathcal{C}$.
Lines 23--28 are synchronized with lines 2--5 of a newly activated agent.

\emph{ResolveConflict:}
(Line 1) Agent $i$ checks whether the head $h$ 
is in communication range and not yet at the goal.
(Lines 2–3) If the head $h$ is in communication range, agent $i$ waits for the head message.
(Lines 4--6) Agent $i$ checks if it has been chosen to be the new head. If so, it broadcasts that it is not competing, sets its leader $L_i = \texttt{nil}$, and sets its target to its current node.
(Lines 7--9) If agent $i$ was not chosen as the new head, it checks if the agent $h$ is currently adjacent or will be adjacent in the next step. In this case, it sets the head as its leader, does not compete to move, and sets its target to its current node.
If agent $i$ was not chosen as the new head, it checks if it received the directed cast from $h$ or if $h$ is currently in an adjacent node. If either is true, $i$ broadcasts that it is not competing, sets its leader $L_i = h$, and sets its target to its current node.
(Lines 10--12) If the current head is not in communication range or neither of the above cases apply, agent $i$ checks if the goal is adjacent and if so, does not compete for other nodes, maintains its current leader $L$, and sets its target to the goal.
(Lines 13--15) Else, if its (non-head) leader is residing in an adjacent node, $i$ keeps this leader and does not move.
(Lines 16--18) If none of the above cases apply, agent $i$ competes to move to the node previously occupied by its leader. It broadcasts $competing = \texttt{true}$ and waits until it receives information on which agents in communication range are competing.
(Line 19) Agent $i$ determines which other agents are competing. An agent a is considered competing if it is not yet at the goal node, has signaled that $competing = \texttt{true}$, and wants to move to the same node as agent $i$.
(Lines 20--23) The \texttt{Selector} function is called on the set of all competing agents and returns a winner. If agent $i$ is the winner, it keeps its previous leader and selects as target the previous node of that leader. Else, $i$ chooses the winner as its new leader and remains at its current node.

\begin{figure}[t]
    \centering
    \subfloat[$k=0$]{
        \includegraphics[width=.15\linewidth]{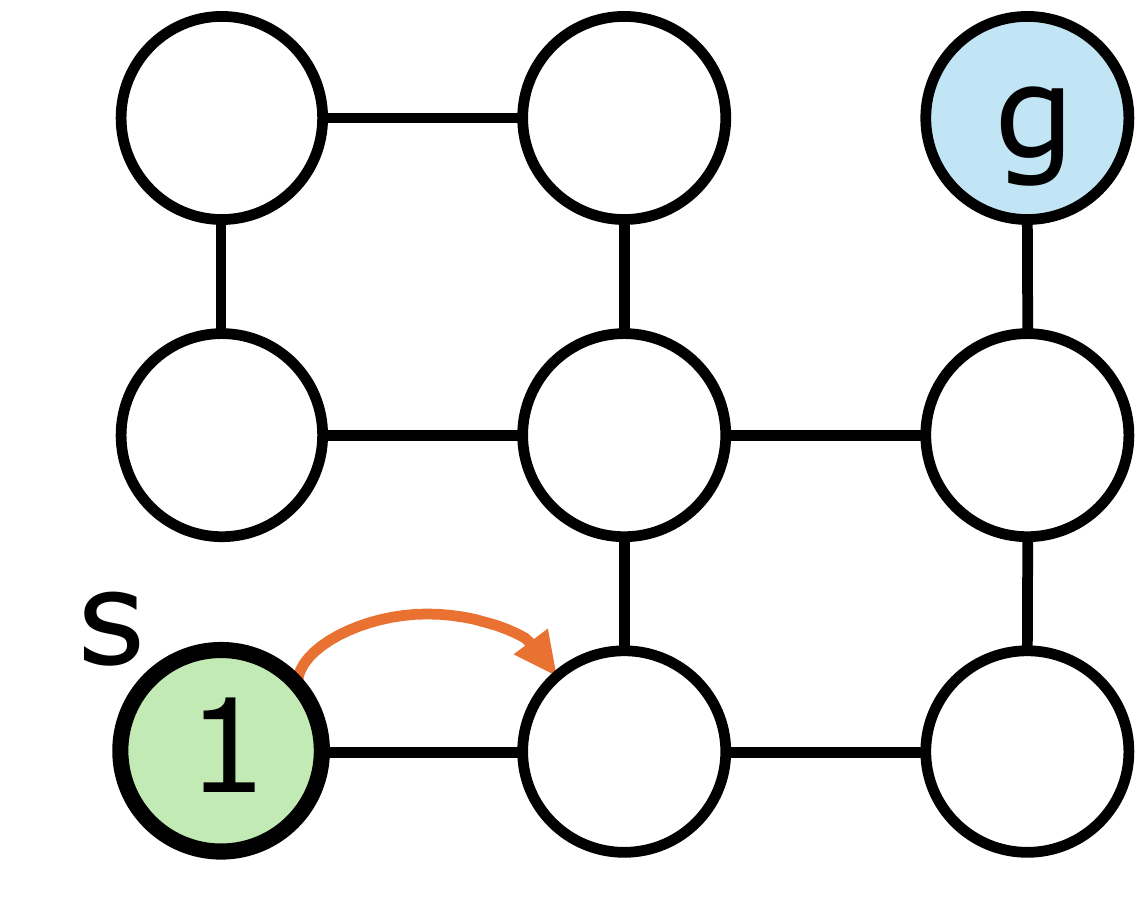}
    }
    \hfill
    \subfloat[$k=1$]{
        \includegraphics[width=.15\linewidth]{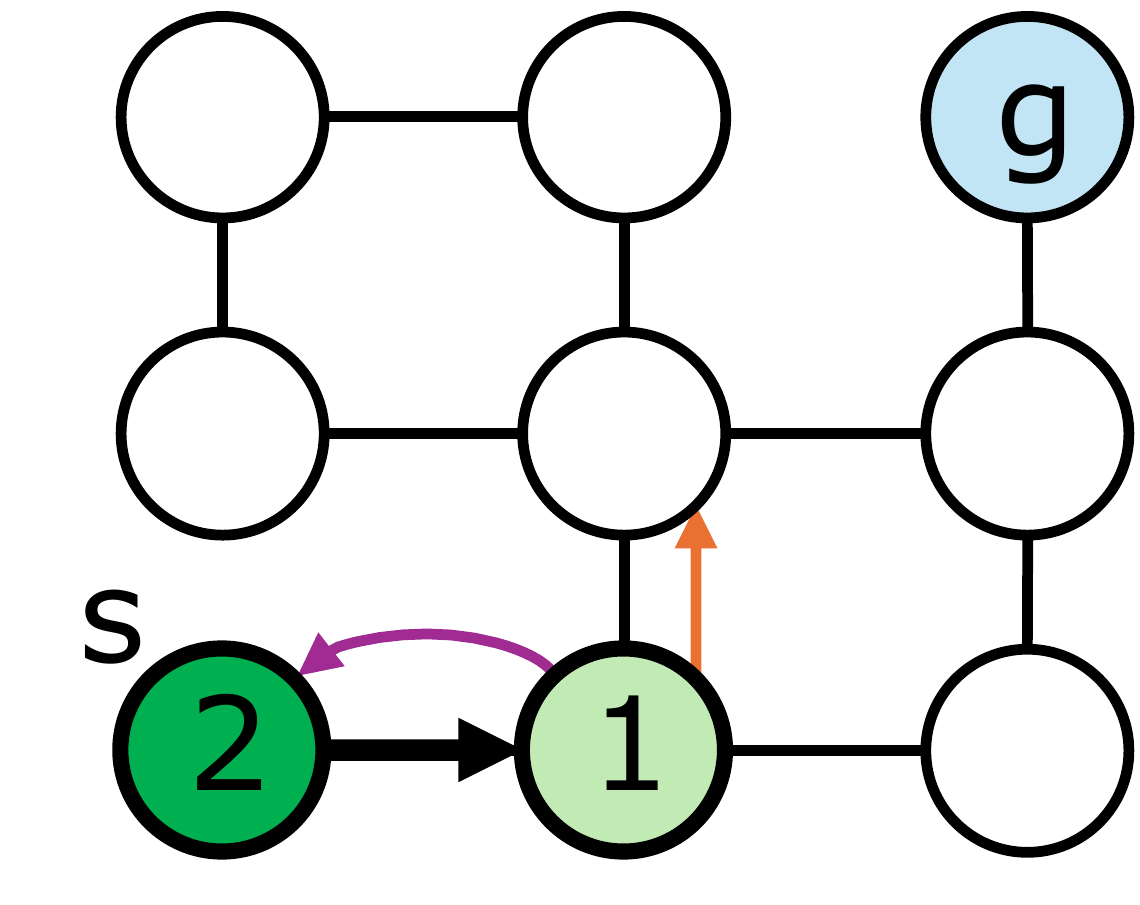}
    }
    \hfill
    \subfloat[$k=2$]{
        \includegraphics[width=.15\linewidth]{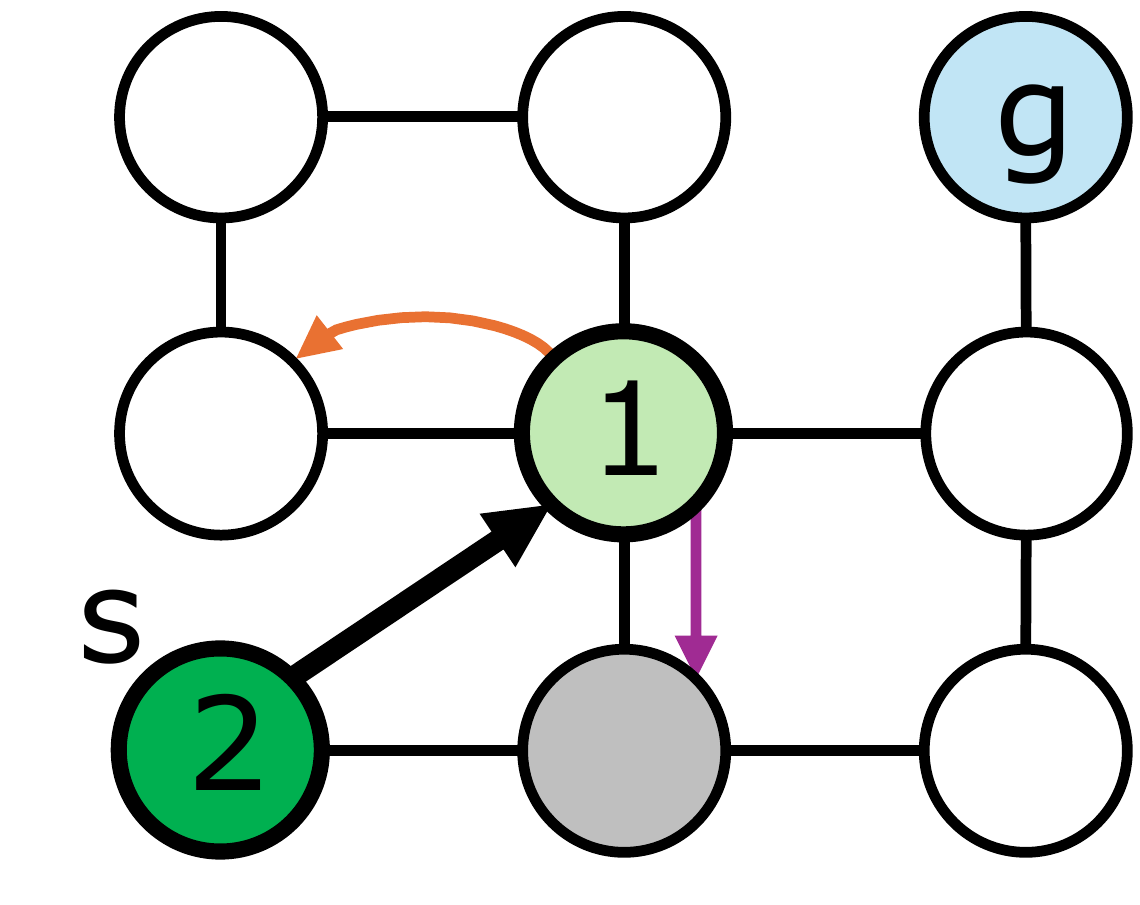}
    }
    \hfill
    \subfloat[$k=4$]{
        \includegraphics[width=.15\linewidth]{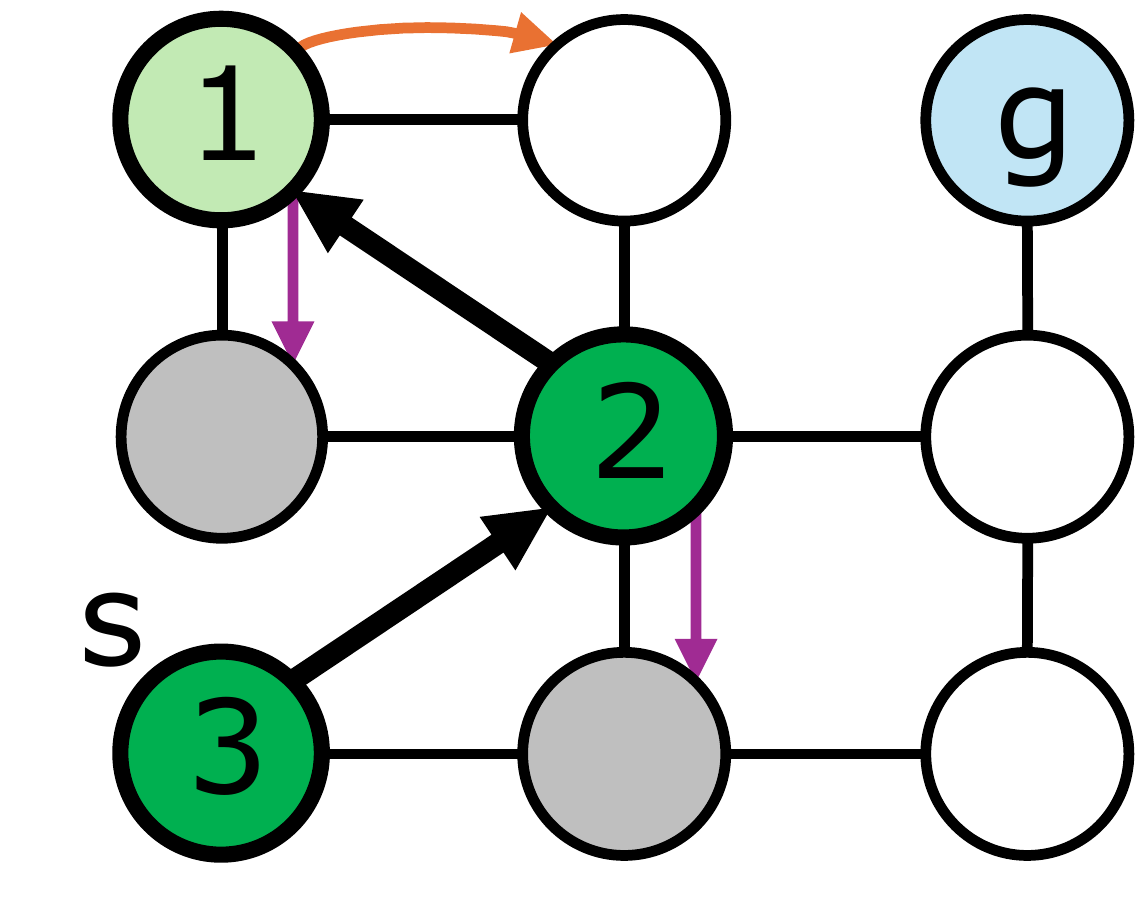}
    }
    \hfill
    \\
    \subfloat[$k=5$]{
        \includegraphics[width=.15\linewidth]{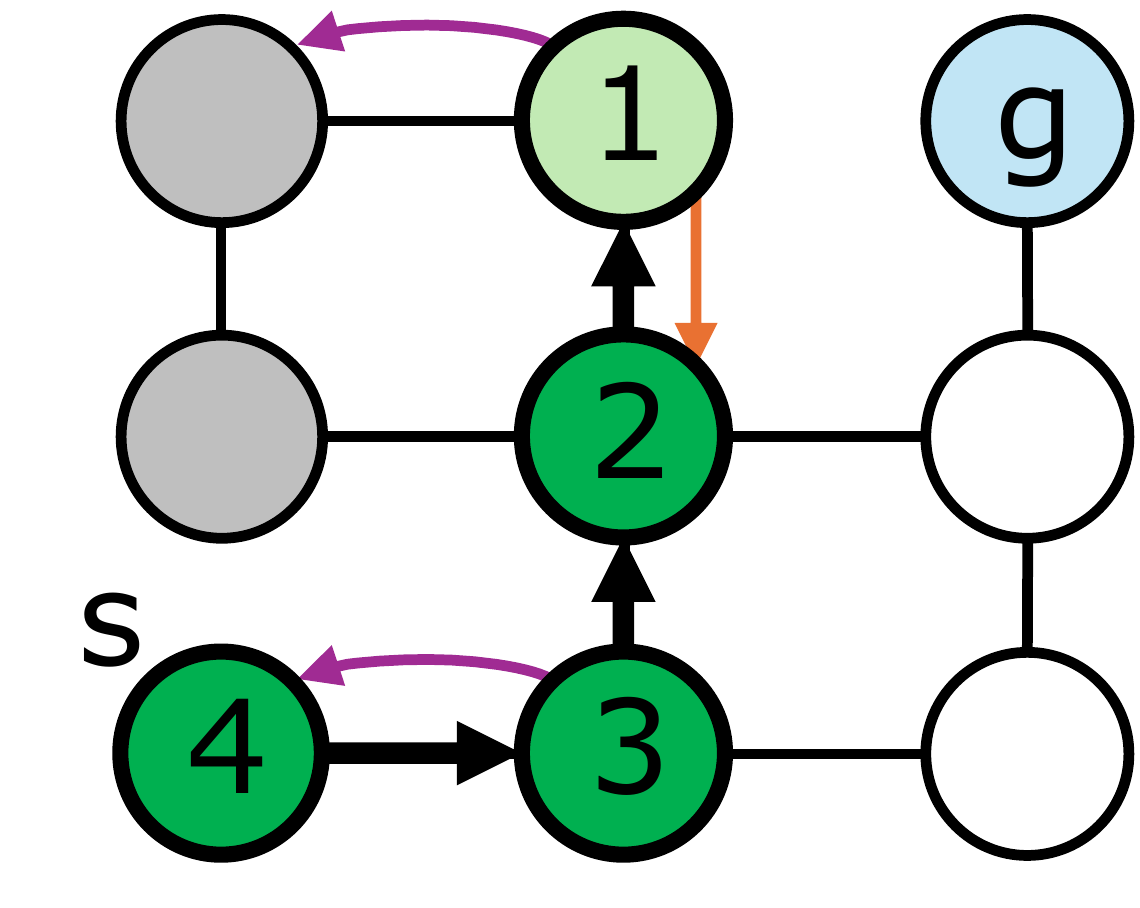}
    }
    \hfill
    \subfloat[$k=6$]{
        \includegraphics[width=.15\linewidth]{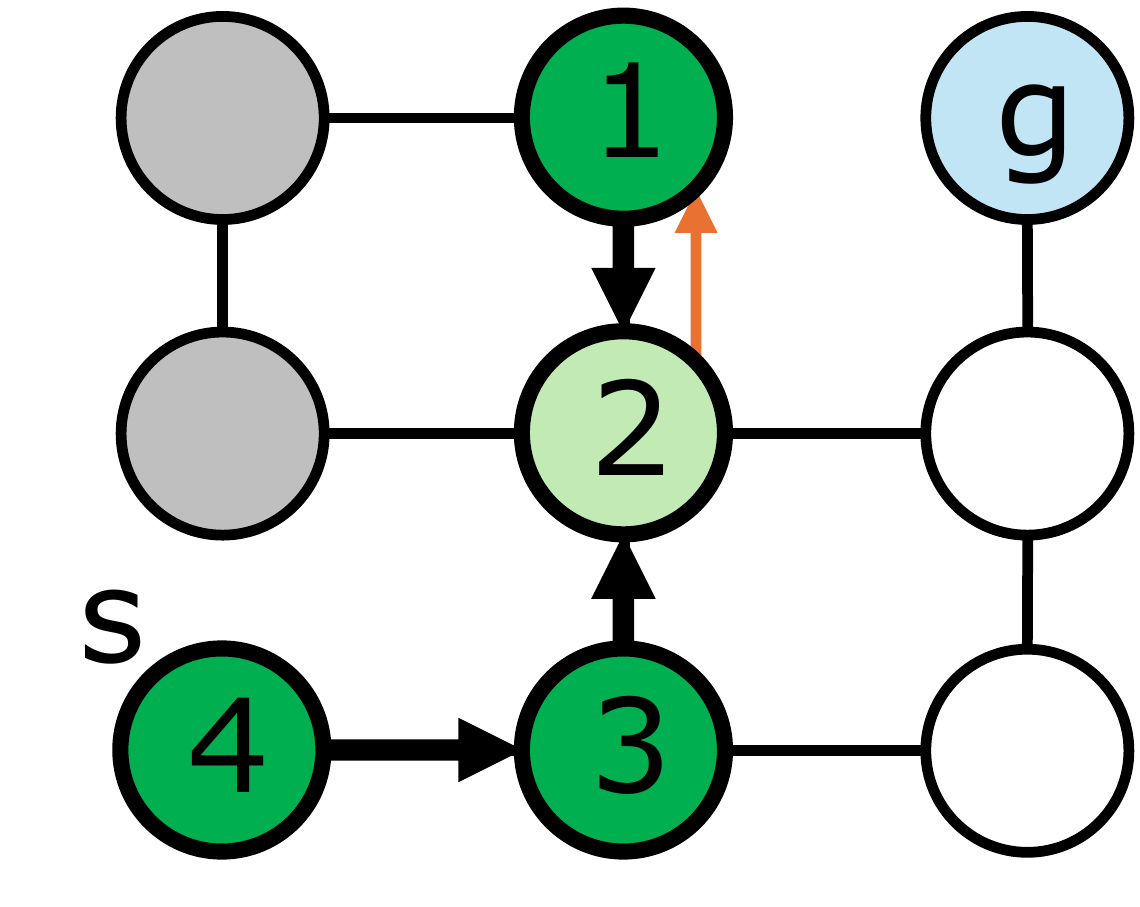}
    }    
    \hfill
    \subfloat[$k=11$]{
        \includegraphics[width=.15\linewidth]{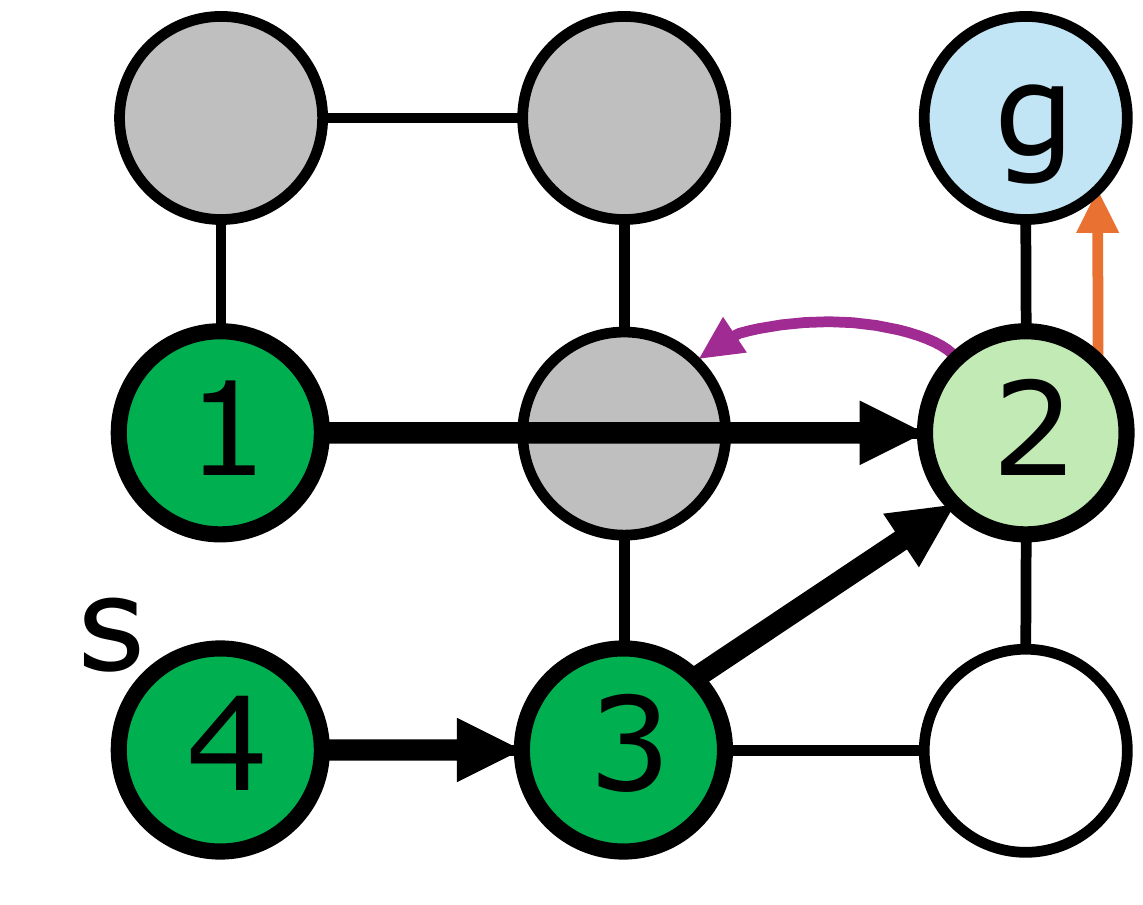}
    }
    \hfill
    \subfloat[$k=12$]{
        \includegraphics[width=.15\linewidth]{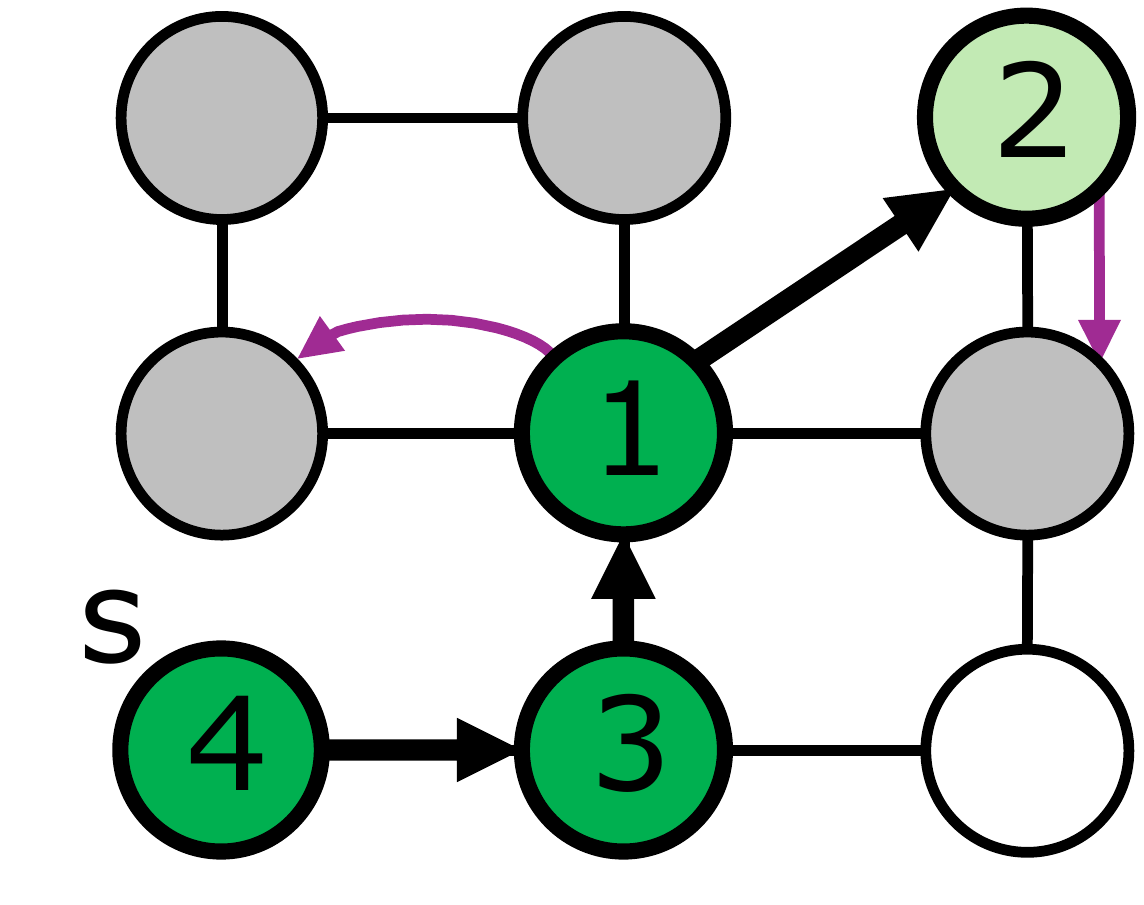}
    }    
    \caption{An example of our algorithm (employing Tr\'emaux's algorithm). Gray nodes have been visited. $k=0$: All agents are at start node $s$; agent $1$ is active, claiming the head role (light green). $k=1$: Agent $1$ leaves $s$ (now its previous node, purple arrow) following Tr\'emaux's algorithm (orange arrow). Agent 2 becomes active (dark green) and chooses agent $1$ as leader (black arrow). $k=2\text{--}4$: Agents $1$ and $2$ gradually navigate the maze, agent $3$ becomes active. $k=5$: Agent $2$ receives the head's directed cast and remains in its current node. Agent $1$ chooses agent $2$ to become the new head at $k=6$. $k=7\text{--}10$: Agent $2$ transfers the head back to agent $1$ which performs backtracking. $k=11$: Agent $2$ is again the head and moves to a node adjacent to the goal. $k=12$: Agent $2$ reaches the goal. Agents $1$ and $3$ compete for agent $2$'s previous node; Agent $1$ wins and moves to the node. Agent $3$ chooses agent $1$ as its new leader.}
    \label{fig:example}
\end{figure}

Fig.~\ref{fig:example} provides an example that highlights particular moments of a group of agents executing the aforementioned algorithms.

\subsection{Mathematical Analysis}
We now prove the correctness of the algorithm and several properties.

\begin{lemma}
\label{lemma:onlyonehead}
    At all discrete times $k \in \mathbb{N}_0$, there is exactly one agent assuming the head role. Let $k_w \in \mathbb{N}_0 \cup \{\infty\}$ be the time at which a single agent running a single-agent maze solver resides in a node adjacent to the goal for the first time. At all time steps $k \leq k_w$, using the same single-agent maze solver, the head agent occupies the same node as the single agent would occupy at time $k$. At all times $k > k_w$, the head agent resides at the goal node.
\end{lemma}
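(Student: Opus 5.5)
We proceed by induction on $k$, carrying three invariants together. (I1) Exactly one active agent $h[k]$ has $L_{h[k]}=\texttt{nil}$ and $v_{h[k]}\neq g$; for $k>k_w$ no such agent exists and the head sits at $g$. (I2) For $k\le k_w$, $v_{h[k]}[k]$ equals the single agent's node $x[k]$. (I3) The solver state handed on in the head messages equals the single agent's solver state at time $k$.

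For the base case $k=0$: only the first agent in $\leq_\mathcal{A}$ is active. It receives no status message, so it keeps $L=\texttt{nil}$. It sits at $s=x[0]$, and the solver is in its initial state. Any agent activated later receives a status message from the agent that has just moved to the unique neighbour of $s$. By Lines 4--5 of Algorithm~\ref{mainalgorithm}, such an agent takes a leader and never becomes a head. So activation cannot create a second head.

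For the inductive step, assume the invariants hold at time $k<k_w$ and look at the decision of the head $h$. There are three cases.
\begin{enumerate}
\item $D^{\text{solver}}=v_h$. Then $h$ stays put and remains head, so $x[k+1]=x[k]$ and the invariants carry over.
\item $D^{\text{solver}}$ is unoccupied. Then $h$ moves there and remains head, so $v_h[k+1]=x[k+1]$. We must also show that no other agent moves into $D^{\text{solver}}$. Any agent that could reach it is adjacent to it, so it hears either the directed cast from $h$ or $h$'s broadcast through $D^{\text{solver}}$. Lines 7--9 of \texttt{ResolveConflict} then make that agent stay and adopt $h$ as its leader. The goal-adjacent branch cannot fire, since $k<k_w$ means $g$ is not adjacent to $x[k+1]$. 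So no conflicting move happens, and $h$ really arrives at $D^{\text{solver}}$.
\item $D^{\text{solver}}$ is occupied by $j$. Messages cannot pass through occupied nodes, so the \texttt{Selector} returns exactly $j$ (as remarked after Line 14). Agent $j$ is in range of $h$, so it waits for the head message, sees $L_h=j$, and becomes head without moving. The old head sets $L=j$ and stays, so it is no longer a head. Every other agent $a$ that sees $h$ either adopts $h$ as its leader or falls through to the later branches; none of these branches sets $L_a=\texttt{nil}$. Hence exactly one head remains, located at $D^{\text{solver}}=x[k+1]$. The solver state transfers through the head message, which gives (I3).
\end{enumerate}
In every case we also check that no non-head agent sets $L=\texttt{nil}$. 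This holds because the only return with $\texttt{nil}$ is Line 6, which requires $L_h=i$.

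At $k=k_w$, the goal is adjacent to the head. The head sets $D=g$ (Lines 9--10), and $g$ admits no conflicts, so it reaches $g$ at $k_w+1$. From then on it has exited the loop. Every agent's test for a head in range requires $v_h\neq g$, so no further head transfer can occur, and the head stays at $g$ for all $k>k_w$. This is where "exactly one head" must be read as "the unique agent with $L=\texttt{nil}$, which is at $g$".

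The main obstacle is showing that a head always finds its intended move realizable, i.e. ruling out that a follower from a cyclic branch competes for the node the head is entering. The relevant case is a follower that is not directly connected to $h$ but is two hops away through $D^{\text{solver}}$. The plan is to show that such an agent lies in $\mathcal{U}_h[D^{\text{solver}}]$, so it receives the directed cast and is blocked at Line 7 before it reaches the competing phase.
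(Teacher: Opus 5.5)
Your proposal is correct and follows essentially the paper's route. Both argue by induction on $k$. In each step the head either keeps the role and moves to an unoccupied $D^{\text{solver}}$, or hands the role to the occupant of $D^{\text{solver}}$. Uniqueness holds because only Line~\ref{algo2:headtransferreturnvnil} of Algorithm~\ref{alg:ResolveConflict} sets $L=\texttt{nil}$, and only for the agent named in the head message. After $k_w$, the test $v_h\neq g$ blocks any further transfer. Your invariant (I3) just makes explicit the solver-state hand-over that the paper leaves implicit in the head message.

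What you call the main obstacle, that no follower enters $D^{\text{solver}}$, is not needed for this lemma; the paper proves it separately in Lemma~\ref{lemma:nocollisions}. Your side remark that the goal-adjacent branch cannot fire is also unnecessary, since agents that heard the directed cast already return at Line~\ref{algo2:returnheadadjacentorunicast}. Its justification is moreover wrong when $k+1=k_w$, because then $x[k+1]$ is adjacent to $g$.
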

\begin{proof}
    Proof by induction: $k=0$: At the beginning of the first time step, there is only one active agent $h$. $h$ resides in the start node $s \neq g$ as would an agent executing the single-agent maze solver and is the only head by default. 
    
    $k \to k+1$: 
    An agent $a$ that activates in time step $k+1$ will 
    activate only because another agent $b$ just left the start node and moved to an adjacent node. Consequently, $a$ is in communication range of $b$ and will receive a status message from $b$, choosing $b$ as its leader. At time $k+1$, $a$ is not the head.
    
    Per the induction assumption, there is exactly one head $h$ at time $k$ that is at the same node as a single agent executing the single-agent maze solver would be.  We distinguish between three cases.
    
    Case 1: If $k < k_w$, $h$ will use the single-agent maze solver to determine where to move to.  
    If the determined adjacent node $D^{\text{solver}}$ is not occupied, $h$ will keep $L_h = \texttt{nil}$, set $D = D^{\text{solver}}$ and inform agents in communication range (lines~\ref{algo:keephead}--\ref{PropagateDsolverHead}). Any agent that has $h$ in communication range will receive the head message (line~\ref{algo2:waitforheadmessage} of Algorithm~\ref{alg:ResolveConflict}). As $h$ sent $L_h = \texttt{nil}$, no agent $i$ will 
    become the head (lines~\ref{algo2:headtransferchosen}--\ref{algo2:headtransferreturnvnil}).
    Moreover, there is no other moment in time where an agent can set its leader to \texttt{nil}. As $h$ will move to $D^{\text{solver}}$ in time step $k+1$, at time $k+1$, there is exactly one head, agent $h$, which resides in the same node as a single agent executing the single-agent maze solver would.
    If $D^{\text{solver}}$ is occupied at time $k$, agent $h$ chooses the occupying agent $L$ as its new leader and sends $L_h = L$ to all agents in communication range via the head message. As $L$ is occupying an adjacent node, it is in communication range of $h$ and will wait for and receive the head message. As $L_{h} = L$, $L$ sets its leader to \texttt{nil} and assumes the head role. As every agent has a unique $ID$, no other agent $j \in \mathcal{C}_h\setminus\{L\}$ will fulfill $L_{h} = j$ and assume the head role. Consequently, at time $k+1$, there is exactly one head, agent $L$, which resides in the same node as the single-agent maze solver would.
    
    Case 2: If $k = k_w$, then $h$ is currently in a node adjacent to the goal. Consequently, it will move to the goal in time step $k+1$. In doing so, it will remain the head and no other agent will receive a request to assume the head role. 
    
    Case 3: If $k > k_w$, then $h$ is at the goal node 
    and no longer sending messages, and will never transfer the head.
    Hence, no other agent will set its leader to $\texttt{nil}$ as $\{ h \in \mathcal{C} \mid L_h = \texttt{nil} \land v_h \neq g\} = \emptyset$. 
    In both latter cases, at $k+1$, there is exactly one head which resides at the goal.
\end{proof}

\begin{lemma}
\label{lemma:traverseviasamenode}
    The first visit to any node $v \in \mathcal{V} \setminus \{ s \}$ is done by the current head agent. Any agent that reaches the goal will do so via the same node adjacent to the goal that was traversed by the head.
\end{lemma}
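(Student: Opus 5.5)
The plan is an invariant argument over time steps. The invariant is that, at every time $k$, every node ever visited by a non-head agent has already been visited by some head. The key observation is that the only ways an agent can move to a new node are limited, and only the head may step into a node no agent has visited before.

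First, enumerate the moves in Algorithm~\ref{mainalgorithm} and Algorithm~\ref{alg:ResolveConflict}.
\begin{enumerate}
\item A head moves to $g$ (line~\ref{headmovetogoal}) or to $D^{\text{solver}}$ (line~\ref{movetoDsolver}).
\item A non-head moves to $g$ (line~\ref{algo2:goaladjacentreturn}).
\item A non-head moves to $v^{prev}_L$ (line~\ref{alg2:keepL}).
\item Every other return of \texttt{ResolveConflict} sets $D=v$, i.e.\ the agent stays.
\item Activation places an agent at $s$, which is excluded from the statement.
\end{enumerate}
In the third case, $v^{prev}_L$ is the node the leader $L$ occupied at the previous time. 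That node was therefore already visited by $L$ at an earlier time. By induction on $k$, the first visit to it was made by the agent that was head at that earlier time.

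Care is needed when $L$'s previous node is $s$, or when the directed cast arrives via a neighboring node. Both cases are harmless: $s$ is excluded, and $v^{prev}_L$ is by construction a node the leader occupied. Hence a non-head never makes a first visit to a node other than $g$.

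For the first claim it then remains to treat $g$. A head reaches $g$ at $k_w+1$ by Lemma~\ref{lemma:onlyonehead}. The delicate question is whether a non-head can step into $g$ via line~\ref{algo2:goaladjacent} before time $k_w+1$.

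This is the main obstacle. For a non-head to reach $g$ early, it would have to occupy a node adjacent to $g$ before the head does. By the invariant already established for non-goal nodes, that adjacent node was first visited by a head. By Lemma~\ref{lemma:onlyonehead}, the head up to $k_w$ follows the single-agent trajectory, so the head was at a goal-adjacent node at that time. By definition of $k_w$, that time is at least $k_w$, and the head immediately moves to $g$ (lines~\ref{headgoaladjacent}--\ref{headmovetogoal}). Consequently no agent is adjacent to $g$ before $k_w$. Any non-head that is adjacent to $g$ at some time $k\ge k_w$ moves to $g$ at $k+1\ge k_w+1$, which does not precede the head's arrival.

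A subtle point is a simultaneous arrival at time $k_w+1$, by a non-head sitting at a different goal-adjacent node. To exclude it, I would argue that at time $k_w$ no node adjacent to $g$ other than the head's node has ever been visited. Each such node was first visited by a head, and at that earlier time the head would already have been goal-adjacent, contradicting the minimality of $k_w$.

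The second claim follows from the same reasoning. Every agent enters $g$ only from a node adjacent to $g$ that it occupies. That node was first visited by a head, and the head moved to $g$ at the first moment it was goal-adjacent. By the minimality of $k_w$, the only goal-adjacent node ever visited before the head entered $g$ is $w:=v_h[k_w]$. After time $k_w+1$ the head is at $g$, so no further first visits occur, since only heads make first visits and the head is now at $g$. Hence no new goal-adjacent node is ever reached, and every agent reaching $g$ must do so from $w$.
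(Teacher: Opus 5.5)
Your proof is correct and follows essentially the same route as the paper: a non-head only ever moves to $g$ or to $v^{prev}_L$, a node its leader already occupied, so every first visit to a node other than $s,g$ is made by a head. Since the head steps into $g$ from the first goal-adjacent node $w$ it reaches and never leaves $g$, $w$ is the only goal-adjacent node ever visited. You are more careful than the paper in tying $w$ to $k_w$ via Lemma~\ref{lemma:onlyonehead} and in ruling out a simultaneous arrival at $g$, whereas the paper treats the case of $s$ adjacent to $g$ separately (your argument covers it implicitly, since then $k_w=0$ and $w=s$).
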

\begin{proof}
In the case where $s$ and $g$ are adjacent, at $k=0$, there is only one active agent. This agent assumes the head role, moves to $g$ (where it will eventually terminate). As all other agents activates afterwards, they also sense the goal being adjacent, move there and terminate. Consequently, the only node $\neq s$ that is visited is $g$ and it is first visited by the head. All agents move to $g$ via $s$.

Assume that $s$ and $g$ are not adjacent. 
Assume that there exists a node $w \in \mathcal{V}\setminus\{s,g\}$ that has not been visited before and is now visited by a non-head agent $a$.
$a$ has only two possible options to move to another node (lines~\ref{algo2:goaladjacentreturn} and \ref{alg2:keepL} of Algorithm~\ref{alg:ResolveConflict}). Since $w \neq g$, $a$ must have moved according to line~\ref{alg2:keepL}, that is, to a node that was previously visited by its leader $L$, which is a contradiction to the fact that $a$ is the first agent to visit $w$. 
Let $u \in \mathcal{V}\setminus\{s,g\}$ be the first node adjacent to the goal that the head agent visited. As the goal is adjacent, the head moves there from $u$ and terminates. As shown above, no other node adjacent to the goal will be visited by any agent, as the head has already terminated at the goal. Thus, no other agent reaches a node adjacent to the goal before the head agent, and the goal node 
$g$ is first visited by the head agent. Moreover, as $u$ is the only adjacent node to the goal that is visited by any agent, it directly follows that every agent that reaches the goal will do so by traversing via $u$.
\end{proof}

\begin{lemma}
\label{lemma:nocollisions}
    For any node $v \in \mathcal{V}\setminus\{g\}$, at all discrete times $k$, $v$ is occupied by at most one active agent. If an agent leaves node $v$ in time step $k$, no other agent that was already active at $k-1$ will move to $v$ in the same time step. 
    In time step $k$, all agents competing for the same node $v$ have the same leader.
\end{lemma}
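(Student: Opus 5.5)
I would prove the three claims together by induction on $k$, since the absence of vertex conflicts at time $k$ is what makes the communication and situated-message arguments valid in time step $k+1$. The base case $k=0$ is immediate: only one agent is active and it sits at $s$. For the inductive step, assume no two active agents share a non-goal node at time $k$. Then classify every active agent's decision in time step $k+1$ according to where it returns in Algorithms~\ref{mainalgorithm} and~\ref{alg:ResolveConflict}. It either stays put (head transfer, following the head, leader adjacent, losing a competition), moves to $g$, is the head moving to an unoccupied $D^{\text{solver}}$, or wins the competition for $v^{prev}_L$. Only the last two move an agent to a non-goal node, so these are the cases to control. A newly activated agent sits at $s$, which was just vacated, and is explicitly excluded from following conflicts by the problem formulation.

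\textbf{Head move.} The head moves to $D^{\text{solver}}$ only if that node is unoccupied at time $k$. By Lemma~\ref{lemma:onlyonehead} it is the unique head, and it sends a directed cast along that edge. Any follower whose leader's previous node is $D^{\text{solver}}$ would need that leader adjacent to or at that node. I would argue that such a follower either received the directed cast, or has $h$ adjacent so that \texttt{NodesTowards}$(h)\cap\mathcal{N}^{\text{occupied}}\neq\emptyset$. In either case it returns at line~\ref{algo2:returnheadadjacentorunicast} and stays put, so no follower also enters $D^{\text{solver}}$.

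\textbf{Competition moves.} The target $v^{prev}_L$ is the node the leader left in step $k$. Hence, if the leader is not adjacent (line~\ref{algo2:leaderadjacent} failed), that node is unoccupied at time $k$, and it lies between agent $i$ and $L$. I would show two things. First, any other active agent $j$ with target $v^{prev}_{L_j}=v^{prev}_L$ is adjacent to or two hops from $i$ through that unoccupied node, so $j\in\mathcal{C}_i$ and $j\in\mathcal{R}_i$ symmetrically. Second, the deterministic \texttt{Selector} on the common set then yields a single winner, so at most one agent enters the node. Moreover, the leader left $v^{prev}_L$ in step $k$, not $k+1$, so no following conflict arises. A leader that stands still in step $k+1$ is never moved into by its follower, because the follower targets a node the leader vacated a step earlier. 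If some agent $a$ leaves $v$ in step $k+1$, any agent moving into $v$ in the same step would have to be a head or a competition winner. The head requires $v$ unoccupied at $k$, a contradiction. A winner requires $v=v^{prev}_L$, which was unoccupied at $k$, again a contradiction. This gives the second claim.

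\textbf{Same leader, and the main obstacle.} The third claim needs that two agents competing for the same $v$ share a leader. The node $v=v^{prev}_L$ was vacated by exactly one agent in step $k$; this uses the induction hypothesis of no vertex conflicts at $k-1$. Therefore $L_i=L_j$ is the unique agent whose previous node is $v$. The main obstacle is the subtle part of the head and competition arguments. One must verify that every agent which could target the same node actually lies in $\mathcal{C}$ or receives the head's directed cast. This relies on situated communication passing only through unoccupied nodes. It also requires the two-hop geometry to guarantee mutual visibility through the shared empty node $v$. I would handle this by a short case analysis on whether each competitor is adjacent to $v$.
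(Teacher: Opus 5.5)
Your proposal is correct and follows essentially the same route as the paper's proof. Both argue by induction: the hypothesis at time $k$ ensures the head moves only into an unoccupied $D^{\text{solver}}$, whose directed cast along $\{v_h, D^{\text{solver}}\}$ halts every agent adjacent to it. The hypothesis at time $k-1$ ensures $v^{prev}_L$ is the previous node of only one leader, so all competitors share $L$, have identical sets $\mathcal{R}\cup\{i\}$, and the deterministic \texttt{Selector} yields a single winner. The following-conflict claim then comes from agents only ever entering nodes unoccupied at time $k$.

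Like the paper, you take for granted that a follower failing the leader-adjacency test is adjacent to an unoccupied $v^{prev}_L$; the paper attributes this to receipt of the leader's directed cast. Strictly, this is the invariant of Lemma~\ref{lemma:leaderincommunicationrange}, whose proof in turn uses the present lemma, so a fully rigorous version would carry both through one joint induction.
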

\begin{proof}
    Proof by strong induction: At time $k=0$, all agents are at $s$, all but one are inactive and the lemma holds for any $v \in \mathcal{V}\setminus\{g\}$.
    Let $k_0 \in \mathbb{N}$ be the first time step where the head leaves the start node and hence occupies an adjacent node. As soon as the head leaves the start node, one agent $a$ that is not the head becomes active at the start node. Again, the lemma is true.
    
    $1, 2, \dots, k \to k+1$: Let $v \in \mathcal{V}\setminus\{g\}$. Per induction assumption at $k$, there is no more than one active agent at $v$. There are only two options how agents can move to $v$: either as the (only) head (Lemma~\ref{lemma:onlyonehead}) moving to $D^{\text{solver}} = v$ or following their leader to $v^{prev}_L = v$. In the first case, if $v$ is currently occupied, the head $h$ chooses the occupying (active) agent (which is only one due to the induction assumption) as new head and does not move (lines~\ref{algo:Dsolveroccupied}--\ref{algo:choosenewhead}). Hence, $h$ will not move to a node in time step $k+1$ that is occupied at time $k$. If $v$ is not occupied, $h$ sends out a directed cast along $D^{\text{solver}} = v$ which all agents in $\mathcal{U}_h(v)$ will receive (i.e., all other agents adjacent to $v$). They choose $h$ as their leader and remain in their current node. Hence, $h$ can safely move to $v$ without collision, that is, there is only one active agent at $v$ at time $k+1$.
    
    In the second case, an agent $a$ aims to move to the previous node $v^{prev}_{L_a} = v$ of its leader $L_a$. It can only do so if it received the directed cast from its leader $L_a$ in the previous time step, that is, $a \in \mathcal{U}_{L_a}(v)$ and $L_a \in \mathcal{C}_a$, if it is selected by the $\texttt{Selector}$-operator, and if the head does not aim to move there in this time step. If $v^{prev}_{L_a}$
    is occupied, $L_a$ must be residing there and $a$ does not move (lines~\ref{algo2:leaderadjacent}--\ref{algo2:leaderadjacentreturn}) as else, $a$ would not have received the directed cast from $L_a$. If $v^{prev}_{L_a}$ is unoccupied, $a$ receives all competing messages from agents in communication range. Every non-head agent $b$ that could move to $v^{prev}_{L_a}$ fulfills the following: 
    (i) $b \in \mathcal{C}_a$, as $b$ must be adjacent to $v^{prev}_{L_a}$ to be able to move there, so $a$ and $b$ are two hops away with an empty node in between, 
    (ii) $v_b \neq g$ as else, $b$ would not move,
    (iii) $competing_b = \texttt{true}$, as this is only sent if $b$ not already decided to remain in its current node nor move to $g$, (iv) $L_b \in \mathcal{C}_a$ and $v^{prev}_{L_b} = v^{prev}_{L_a}$, as else, $b$ would not aim to move to $v^{prev}_{L_a}$. 
    Per induction assumption at $k-1$, there was at most one agent residing at $v^{prev}_{L_a}$, which means that in time step $k+1$, $v^{prev}_{L_a}$ can only be the previous node of $L_a$ and no other agent. Hence, every non-head agent that aims to move to and therefore competes for $v^{prev}_{L_a}$ in time step $k+1$ does so because $L_a$ is its leader.
    Consequently, $a$ has every competing agent in $\mathcal{R}_a$ (line~\ref{alg2:getR}). It directly follows that $b \in \mathcal{R}_a \Longleftrightarrow a \in \mathcal{R}_b$ and $\mathcal{R}_a \cup \{a\} = \mathcal{R}_b \cup \{b\}$. All agents competing for $v$ choose the same agent as winner by applying the (deterministic) $\texttt{Selector}$-operator on the same set, resulting in only one agent that is allowed to move to $v$. 
    
    Hence, at time $k+1$, there is at most one active agent at $v$. As agents only move simultaneously and to currently unoccupied nodes, no other agent that is active at $k$ will aim to move to $v$ in time step $k+1$ if $v$ is occupied at $k$. 
\end{proof}

\begin{corollary}
    Under the algorithm, vertex and following conflicts cannot occur.
\end{corollary}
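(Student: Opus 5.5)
The corollary follows from Lemma~\ref{lemma:nocollisions} once each part of that lemma is matched to the corresponding definition of a conflict. I would argue by contradiction, treating the two conflict types separately and relying on the first two statements of Lemma~\ref{lemma:nocollisions}.

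\emph{Vertex conflicts.} Suppose a vertex conflict occurs at some time $k$. Then there are active agents $i\neq j$ in $\mathcal{A}^{\text{active}}[k]$ with $v_i[k]=v_j[k]=v\neq g$. The first statement of Lemma~\ref{lemma:nocollisions} says that every $v\in\mathcal{V}\setminus\{g\}$ is occupied by at most one active agent at every discrete time $k$. This is an immediate contradiction. The start node $s$ needs a remark, since inactive agents sit there. They are not in $\mathcal{A}^{\text{active}}$, so they cannot take part in a vertex conflict. Activation happens exactly when the active agent leaves $s$, so at most one active agent is ever at $s$, and the induction in the lemma already accounts for this.

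\emph{Following conflicts.} Suppose there are $i\neq j$ in $\mathcal{A}^{\text{active}}[k]$ with $v_i[k]=v_j[k+1]=v\neq g$. There are two cases.

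If $i$ stays at $v$ during time step $k+1$, then $v_i[k+1]=v_j[k+1]\neq g$. Both agents are still active at $k+1$, because active agents remain active indefinitely. This is a vertex conflict at time $k+1$, which the previous case rules out.

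If $i$ leaves $v$ during time step $k+1$, then $j$ must have entered $v$ in the same step. Since $v_j[k]\neq v_i[k]$ by the first statement of the lemma, $j$ did not simply stay at $v$. The second statement of Lemma~\ref{lemma:nocollisions} says exactly that no agent already active at time $k$ moves into a node vacated in that step, which is a contradiction.

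Newly activated agents do not need separate treatment. By definition they are not in $\mathcal{A}^{\text{active}}[k]$, so they are excluded from following conflicts.

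The one delicate point is bookkeeping. The lemma phrases its second statement for "time step $k$" with agents active at $k-1$, whereas the conflict definition uses times $k$ and $k+1$. These indices have to be aligned consistently, since time step $k+1$ is the interval $[k,k+1)$. Once that is done, the corollary is a direct restatement of the lemma.
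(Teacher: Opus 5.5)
Your proposal is correct and takes the paper's route. The paper gives no separate proof: the corollary is stated as a direct consequence of Lemma~\ref{lemma:nocollisions}. Your argument spells out that consequence, matching the lemma's first statement to vertex conflicts and its second statement, with the index shift, to following conflicts. It also reduces the case where $i$ stays at $v$ to a vertex conflict at time $k+1$.
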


\begin{lemma}
\label{lemma:leaderincommunicationrange}
    At all discrete times $k$, if an active agent $a$ is not the head, it has its current leader $L_a$ in communication range and one or more of the following three cases are true:
    \begin{enumerate}
        \item the previous node of its leader $v^{prev}_{L_a}[k]$ is unoccupied and adjacent to $v_a[k]$ and $v_{L_a}[k]$
        \item the current node of $a$'s leader $v_{L_a}[k]$ is adjacent to $a$'s current node $v_a[k]$
        \item $v_a[k]=v_{L_a}[k]=g$.
    \end{enumerate}
\end{lemma}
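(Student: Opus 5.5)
We argue by induction on $k$, in the same style as Lemmas~\ref{lemma:onlyonehead} and~\ref{lemma:nocollisions}, proving the invariant for every active non-head agent $a$.

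\emph{Base case and newly activated agents.} At $k=0$ only the head is active, so the statement holds vacuously. An agent $a$ activated in time step $k+1$ is activated because some agent $b$ has just left $s$ for the unique neighbour of $s$. Agent $a$ then sets $L_a=b$. Hence $v_{L_a}[k+1]$ is adjacent to $v_a[k+1]=s$, so case 2 holds, and $b\in\mathcal{C}_a$ by the definition of $\mathcal{E}^{\text{com}}$. We must also check that $a$ is not the head, which is given by Lemma~\ref{lemma:onlyonehead}.

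\emph{Inductive step.} Assume the invariant at time $k$ and let $a$ be a non-head agent at time $k+1$. We go through every return branch $a$ could have executed in time step $k+1$ and recompute its position and its leader's position at $k+1$.

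If $a$ was the head at $k$ and handed over the role (line~\ref{algo:choosenewhead}), then $a$ does not move and its new leader $L$ occupies $D^{\text{solver}}$, which is adjacent to $v_a$. Since $L$ is the new head and does not move in that step, case 2 holds.

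If $a$ returned at line~\ref{algo2:returnheadadjacentorunicast}, then $L_a=h$ and $a$ stays put. Two sub-cases arise:
\begin{itemize}
\item $h$ is currently adjacent and either stays, or hands over the head role, in which case $h$ does not move: case 2 holds.
\item $h$ moves to $D^{\text{solver}}$, and $a$ either received the directed cast, so $a$ is adjacent to $D^{\text{solver}}$, or $h$ was adjacent. Then at $k+1$ either $h$ is adjacent to $a$ (case 2), or $v^{prev}_h$ is the old node of $h$, which is adjacent to both and now unoccupied (case 1). This uses Lemma~\ref{lemma:nocollisions}: nobody else enters it, because of the following-conflict rule.
\end{itemize}

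If $a$ moved to $g$, then its leader, which must precede it in reaching the goal (Lemma~\ref{lemma:traverseviasamenode}), either is at $g$ or is adjacent. This needs care: the configuration $v_a=g$ while $L_a\neq g$ must be handled by noting that $a$ terminates, so presumably the lemma only concerns agents not at the goal, or agents at $g$ satisfying case 3.

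If $a$ returned at line~\ref{algo2:leaderadjacentreturn}, case 2 held and $a$ stays. The leader $L_a$ either stays (case 2 persists) or moves away, and then $v^{prev}_{L_a}[k+1]=v_{L_a}[k]$, which is adjacent to both agents. That node is unoccupied at $k+1$ unless some competitor took it. If a competitor $b$ took it, we need to argue that $a$ redirects its leader. This is where the status update of line~\ref{updateC} matters, and it is a delicate point.

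In the competing branch:
\begin{itemize}
\item If $a$ wins, it moves to $v^{prev}_{L_a}[k]$, which case 1 guarantees is adjacent to $v_{L_a}[k]$. If $L_a$ stays, case 2 holds. If $L_a$ moves, its new previous node is the node $a$ now occupies, so $L_a$ is adjacent to $a$ (case 2).
\item If $a$ loses, it adopts the winner $w$. The winner moves to $v^{prev}_{L_a}[k]$, which is adjacent to $v_a$ by case 1, so case 2 holds.
\end{itemize}

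In each branch, communication range follows from adjacency (case 2), from a shared unoccupied neighbour (case 1), or from both agents being at $g$ (case 3), directly by the definition of $\mathcal{E}^{\text{com}}$.

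\emph{Main obstacle.} The hardest step is the one where the leader moves and some other agent simultaneously moves into the leader's previous node, so that $v^{prev}_{L_a}$ becomes occupied. I would resolve it with Lemma~\ref{lemma:nocollisions}: every agent competing for that node had the same leader, and only one winner entered. Then I would show that, in the configurations reachable by the algorithm, $a$ must itself have been that competitor or the winner's loser. In either case $a$ either moved there or re-pointed its leader to the winner, which is now adjacent.
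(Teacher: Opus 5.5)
Your overall plan matches the paper's: induction on $k$, with a case split over the return points of Algorithm~\ref{alg:ResolveConflict}. Your base case and your head hand-over, head-adoption and losing-competitor branches agree with it. There are, however, two genuine gaps.

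\emph{First gap: the main obstacle and the winning branch.} The step you call the main obstacle cannot occur, and misjudging it leads to a real error. The second clause of Lemma~\ref{lemma:nocollisions}, which you already use in the head sub-case, rules it out. In step $k+1$ no agent enters a node occupied at time $k$: the head only moves to an unoccupied $D^{\text{solver}}$, and a follower only competes for $v^{prev}_{L}[k]$, which is unoccupied at $k$ by case~1 of the induction hypothesis. Since $L_a$ occupies $v_{L_a}[k]$ at time $k$, if $L_a$ moves away then $v^{prev}_{L_a}[k+1]=v_{L_a}[k]$ is unoccupied at $k+1$. Your proposed resolution (``$a$ must itself have been that competitor or the winner's loser'') could not work anyway, because in the branch returning at line~\ref{algo2:leaderadjacentreturn} agent $a$ does not compete.

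The same oversight produces a false claim in your winning branch. If $L_a$ moves in step $k+1$, its new previous node is $v_{L_a}[k]$, not $v^{prev}_{L_a}[k]$ (the node $a$ has just entered), so $L_a$ need not be adjacent to $a$. What actually holds is case~1: $v_{L_a}[k]$ is adjacent to $a$'s new node and to $v_{L_a}[k+1]$, and it is unoccupied at $k+1$ by the clause of Lemma~\ref{lemma:nocollisions} just cited.

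\emph{Second gap: the goal branch.} You leave this branch open, and suggesting that the lemma ``only concerns agents not at the goal'' changes a statement that explicitly includes case~3. The missing argument runs as follows.
\begin{enumerate}
\item Whoever occupies a goal-adjacent node $w$ moves to $g$ in the next step: the head via line~\ref{headmovetogoal}, a follower via line~\ref{algo2:goaladjacent}, since the head is already at $g$ by Lemma~\ref{lemma:traverseviasamenode}.
\item Hence a non-head $a$ at $w$ at time $k$ entered $w$ during step $k$.
\item It did not enter as head, since a head that moves keeps its role. So it entered by winning the competition for $w=v^{prev}_{L_a}[k-1]$.
\item Thus $L_a$ occupied $w$ at time $k-2$ and left it in step $k-1$, necessarily to $g$ by the first point.
\item Since $a$ keeps $L_a$ when it returns at line~\ref{algo2:goaladjacentreturn}, you get $v_a[k+1]=v_{L_a}[k+1]=g$.
\end{enumerate}
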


\begin{proof}
    See Appendix \ref{sec:proofs} for the complete proof. 
    Non-head agents move to follow their leader. Agents only compete to move when their leader has moved to a non-adjacent node. An activated agent starts with their leader in an adjacent node by default, so this occurs only when a leader moves two nodes away, leaving an unoccupied node. The leader is still in communication range at this point, but may move again. Agents following this leader now compete for the unoccupied space. If an agent wins this competition, it moves to follow its leader. Otherwise, another agent moves to this space and is chosen as new leader, maintaining proximity and communication network cohesiveness. Leaders may also stop communicating when they reach a goal. In this case, agents still compete for the previously occupied node, which brings them into proximity of the goal and maintains the network connectivity for remaining agents.
\end{proof}

\begin{lemma}
\label{lemma:leadershipistree}
    At all discrete times $k$, an active agent follows the head directly or indirectly. Formally, at discrete time $k$, let the directed leadership graph be $\mathcal{T}_L[k] = (\mathcal{A}^{\text{active}}[k], \mathcal{E}_L[k])$, $a,b \in \mathcal{A}^{\text{active}}[k]: b \text{ is leader of } a \Longleftrightarrow (a,b) \in \mathcal{E}_L[k]$. Then, $\mathcal{T}_L[k]$ is an \emph{in-tree}~\cite{mehlhorn2008algorithms} with the current head agent $h$ as its root, that is, for all $a \in \mathcal{A}^{\text{active}}[k]\setminus\{h\}$, there exists exactly one path to $h$ in $\mathcal{T}_L[k]$.
\end{lemma}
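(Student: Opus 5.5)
We argue by induction on $k$, showing that $\mathcal{T}_L[k]$ is an in-tree rooted at the current head $h[k]$. By Lemma~\ref{lemma:onlyonehead} there is exactly one head at every time. Since every non-head agent has exactly one leader pointer, each vertex other than $h$ has out-degree one and $h$ has out-degree zero. It therefore suffices to show that $\mathcal{T}_L[k]$ is \emph{acyclic}: following leader pointers from any agent must then terminate, and it can only terminate at the unique vertex of out-degree zero, namely $h$. This gives exactly one path to $h$. We must also check that every leader pointer refers to an active agent. This follows from Lemma~\ref{lemma:leaderincommunicationrange}, since the leader lies in communication range and $\mathcal{V}^{\text{com}}=\mathcal{A}^{\text{active}}$. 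Agents that terminate at the goal are the one delicate point and are handled below.

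The base case $k=0$ is immediate: there is a single active agent, the head. For the step $k\to k+1$, we enumerate every way a leader pointer can change during time step $k+1$ and show that no change closes a cycle.

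(a) A newly activated agent $a$ sets $L_a=b$, where $b$ just left $s$. The agent $a$ has no followers, so attaching the fresh leaf $a$ to the existing tree keeps it a tree.

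(b) A head transfer, in which $h$ chooses the occupant $j$ of $D^{\text{solver}}$. Then $L_h\gets j$ and $L_j\gets\texttt{nil}$. Reversing the edge out of $j$ and adding $(h,j)$ amounts to re-rooting. More precisely, $j$ becomes the root, and $h$, being the old root, had no outgoing edge, so $(h,j)$ cannot close a cycle unless $j$ already reached $h$ through edges that still exist. The path from $j$ to $h$ loses only its first edge. I would handle this by noting that every other agent on that old path still reaches $h$, and $h$ now points to $j$, which is the root. So every agent reaches $j$.

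(c) A non-head agent $i$ resets $L_i\gets h$ (lines 7--9 of Algorithm~\ref{alg:ResolveConflict}). Here $h$ is the root, so any path from $h$ is trivial and no cycle can form.

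(d) A lost competition sets $L_i\gets w$, where $w$ won the competition for $v^{prev}_L$. By Lemma~\ref{lemma:nocollisions}, all competitors share the same leader $L$. The winner $w$ keeps $L_w=L$, since it does not itself change its pointer in this step. Hence the new path from $i$ runs $i\to w\to L\to\cdots$, and its suffix from $L$ is the same as before, when $i\to L$.

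The main obstacle is simultaneity. Several agents update their pointers in the same step, so we must rule out a cycle created by changes that are each harmless on their own. For example, $w$ might change its leader in case (c) at the same time as $i$ points to $w$. My plan is to argue that the cases are mutually exclusive per agent. Case (d) occurs only for agents that reached the competition lines, while the winner of that competition returns $L$ unchanged. Since all of them share the leader $L$ and $L$ is distinct from them, the competitors of a given node form a star that hangs under $L$. I would also show that agents redirecting to $h$ in case (c) point to a vertex with no outgoing edge after the step, unless (b) fires. If (b) fires, the new head is $j$. Agents that chose $h$ via lines 7--9 then point to $h\to j$, and $j$ itself is a root.

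The terminated agents need a short separate remark. The head at the goal remains the root (Lemma~\ref{lemma:onlyonehead}). Non-head agents at the goal are removed from $\mathcal{A}^{\text{active}}$, and their followers reassign through the competition rule (d), so they too end up with a path to the root.
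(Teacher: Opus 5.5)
Your core argument is the paper's: induction on $k$ over the same four pointer-changing operations (activation at $s$, head transfer, redirection to the head, lost competition). The bookkeeping differs in two places. The paper certifies the tree via $|\mathcal{E}_L[k]|=|\mathcal{A}^{\text{active}}[k]|-1$ plus unique paths, where you use out-degree one plus acyclicity. The paper also serializes the simultaneous updates inside the step ($k+\epsilon$, $k+\delta$, $k+\gamma$), where you argue per-agent exclusivity.

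Your exclusivity observations suffice once they are phrased via depth in $\mathcal{T}_L[k]$:
\begin{itemize}
\item kept edges and edges redirected to $h$ strictly decrease depth;
\item a loser's new edge $i\to w$ preserves depth, but it is always followed by the winner's unchanged edge $(w,L)$;
\item the edge $(h,j)$ ends at the new sink $j$;
\item a freshly activated agent has no incoming edge.
\end{itemize}
So no cycle can close, even when updates happen all along a leader chain. Note that your phrase ``its suffix from $L$ is the same as before'' is not literally true when $L$ itself updates.

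The part that is actually wrong is your remark on terminated agents. Agents at $g$ are not removed from $\mathcal{A}^{\text{active}}$. The model states that active agents remain active indefinitely. The clause $v_i=v_j$ in $\mathcal{E}^{\text{com}}$ exists precisely for two active agents at $g$, and case~3 of Lemma~\ref{lemma:leaderincommunicationrange} is about such pairs.

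Nor do their followers ``reassign through rule (d)''. The winner of the competition for $v^{prev}_L$ keeps $L$ (line~\ref{alg2:keepL} of Algorithm~\ref{alg:ResolveConflict}) even when $L$ already sits at $g$. This is exactly how agents enter the goal in the proof of Theorem~\ref{theorem:correctness}. Under your reading, that winner's pointer would leave $\mathcal{A}^{\text{active}}$ and the lemma would be false.

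Simply drop the remark. An agent at $g$ stays in $\mathcal{T}_L$ with a frozen pointer and performs none of the four operations. The induction therefore covers it without any special case, as in the paper.
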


\begin{proof}
    See Appendix \ref{sec:proofs} for the complete proof. 
    For an agent to not be following the head (either directly or indirectly), it would have to be following another agent for which no path exists leading to the head. This would require either a cycle within the tree created by following agents, or for two agents to be acting as the head. Lemma~\ref{lemma:onlyonehead} shows that only one head exists at a time. A cycle in leadership can only arise when the head passes the head role to a new agent who is choosing to follow the current head. However, once an agent receives a head message informing it of its role as new head, it removes the following edge to the former head. Additionally, when agents win competitions for moving to unoccupied nodes, they maintain their leader and losing agents select the winner as their new leader, maintaining the in-tree behavior of leadership.
\end{proof}

\begin{theorem}
\label{theorem:correctness}
    If a single agent executing a single-agent maze solver reaches a node adjacent to the goal for the first time in time step $k_g - 1$, then $n$ agents executing Algorithm~\ref{mainalgorithm} (using the same single-agent maze solver) reach the goal in at most $k_g + 2 (n-1)$ time steps.
\end{theorem}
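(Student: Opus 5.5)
The plan is to split the makespan into two phases: the head's arrival at the goal, and then a "drain" phase in which the remaining $n-1$ agents pass through the single goal-adjacent node $u$ at a rate of at least one agent every two time steps.

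\emph{Phase 1 (the head).} By the hypothesis, $k_w = k_g - 1$. By Lemma~\ref{lemma:onlyonehead}, at time $k_w$ the head occupies the same node as the single agent, so it is adjacent to $g$. Lines~\ref{headgoaladjacent}--\ref{headmovetogoal} of Algorithm~\ref{mainalgorithm} then move it to $g$, where it arrives at time $k_g$. By Lemma~\ref{lemma:traverseviasamenode}, this entry is through a node $u$, and every later agent also reaches $g$ through $u$. So it suffices to show that after $k_g$ the goal receives a new agent at least every two steps until all $n$ agents have arrived.

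\emph{Phase 2 (draining through $u$).} I would prove by induction the following invariant. Whenever an agent leaves $u$ into $g$ at some step, and non-goal agents remain, then within the next step some agent enters $u$, and in the step after that it enters $g$.

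\begin{itemize}
\item \textbf{Step into $g$.} An agent residing at $u$ has $g\in\mathcal N$. No non-terminated head exists, since the head sits at $g$ and $\{h\in\mathcal C \mid L_h=\texttt{nil}\land v_h\neq g\}=\emptyset$. Hence lines~\ref{algo2:goaladjacent}--\ref{algo2:goaladjacentreturn} of Algorithm~\ref{alg:ResolveConflict} send it to $g$ in the next step, with no waiting.
\item \textbf{Refilling $u$.} Use Lemma~\ref{lemma:leadershipistree}: the in-tree is rooted at the head, now at $g$. Take a non-goal agent $a$ whose leader has just reached $g$ from $u$. Then $v^{prev}_{L_a}=u$. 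By Lemma~\ref{lemma:leaderincommunicationrange}, case~1 must hold, so $u$ is unoccupied and adjacent to $v_a$.
\item \textbf{Why $a$ competes.} The test at line~\ref{algo2:leaderadjacent} fails, because $v_{L_a}=g$ is not occupied in the model's sense. So $a$ reaches the competition at lines~\ref{algo2:competingin}--\ref{alg2:updateL}. By Lemma~\ref{lemma:nocollisions}, all competitors share the leader and agree on the winner, so exactly one agent moves into $u$.
\item \textbf{Losers re-attach.} The losers take the winner as their new leader. This keeps the tree and the proximity structure, so the invariant repeats.
\end{itemize}

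Each remaining agent therefore costs at most $2$ steps: one to enter $u$ and one to enter $g$. This gives the bound $k_g+2(n-1)$.

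\emph{Main obstacle.} The delicate point is showing that at every stage there actually \emph{is} an agent ready to compete for $u$ immediately. That means the chain of agents from $u$ back towards $s$ must not have gaps, and agents still inactive at $s$ must not cause delays.

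My first attempt would be an amortized/pipelining argument. The bound only requires that the $j$-th agent (in arrival order) reaches $g$ by $k_g+2(j-1)$. Every non-head agent trails its leader by at most one empty node (Lemma~\ref{lemma:leaderincommunicationrange}), and it advances whenever its leader's previous node is free and it wins. So I would prove the following by induction along the in-tree: an agent at tree-depth $d$ reaches $u$ no later than $2d-1$ steps after $k_g$.

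Activation is handled the same way. A new agent activates exactly when its predecessor leaves $s$ and immediately takes that agent as its leader. It thus enters the tree at depth one greater than the agent ahead of it, and its arrival time is covered by the same induction.

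If this depth-based bound fails because of competitions that branch the tree, I would fall back on a direct time-indexed argument. There I would show that the node $u$ is never empty for two consecutive steps while non-goal agents remain, and use Lemma~\ref{lemma:leaderincommunicationrange} to guarantee that some agent is adjacent to $u$ whenever $u$ is vacated.
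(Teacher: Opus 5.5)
Your decomposition matches the paper's: the head enters $g$ at $k_g$ through the unique goal-adjacent node $u$, and afterwards each remaining agent costs at most two steps through $u$. Your local checks are fine: an agent on $u$ goes straight to $g$, a follower of a goal agent fails the test in line~\ref{algo2:leaderadjacent} and competes, and competitors agree on one winner.

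The gap is the step you flag yourself. In ``Refilling $u$'' you simply \emph{take} a non-goal agent whose leader has just entered $g$ from $u$. Nothing you prove guarantees that such an agent exists each time $u$ is vacated, and this existence is the real content of the theorem. Neither remedy closes it.
\begin{itemize}
\item The depth-based bound is false once the tree branches. Suppose at time $k_g$ two different neighbours of $u$ hold direct followers of the head. This can happen in a cyclic maze: one follower stayed put because the head on $u$ was adjacent, and one moved into the head's previous node. Both have depth $1$ and both compete for $u$ in step $k_g+1$. The loser cannot enter $u$ before $k_g+3>k_g+2\cdot 1-1$.
\item The fallback only restates the target. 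Lemma~\ref{lemma:leaderincommunicationrange} speaks about each agent relative to its \emph{own} leader. By itself it does not tell you that anybody's leader is the agent that just left $u$.
\end{itemize}
Separately, when $s$ is adjacent to $g$ you have $u=s$ and refilling happens by activation. There your claim that case~1 of Lemma~\ref{lemma:leaderincommunicationrange} must hold is false; the paper treats this case separately.

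The missing idea is to combine Lemma~\ref{lemma:leaderincommunicationrange} with the in-tree of Lemma~\ref{lemma:leadershipistree}. The paper proves that when an agent $a$ enters $g$ from $u$ at time $k_a$, every agent not yet at $g$ is a direct or indirect follower of $a$. For $a=h$ this is trivial. For $a\neq h$, suppose some $b$ is not a follower of $a$, and walk the unique leadership path from the root $h$ down to $b$.
\begin{itemize}
\item The head has been at $g$ for at least two steps, so the first agent $c$ below $h$ has a leader whose previous node is $g$ itself. Case~1 is therefore impossible.
\item So $c$ is on $u$ (case~2, using Lemma~\ref{lemma:traverseviasamenode}) or on $g$ (case~3).
\item It cannot be on $u$, because $a$ occupied $u$ at $k_a-1$ and just left it (Lemma~\ref{lemma:nocollisions}). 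Hence $c$ is at $g$, and for the same reason it was already there at $k_a-1$.
\item Repeating this down the path puts $b$ at $g$, a contradiction.
\end{itemize}
Consequently, if non-goal agents remain, $a$ has a direct follower. Case~1 then forces that follower onto a neighbour of the now empty $u$, ready to compete. Equivalently, from any non-goal agent walk up to the first agent on $g$: its non-goal child is either on $u$ or next to an empty $u$ that its leader has just left.

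With this claim your invariant and the two-steps-per-agent count go through directly. Inactive agents need no extra care, since while any remain an active non-goal agent sits at $s$.
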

\begin{proof}
    If the goal node is adjacent to the start node, the single agent is already in an adjacent node at time $k=0$, that is, $k_g = 1$. Executing Algorithm~\ref{mainalgorithm}, the first agent moves to $g$ in the first time step (lines~\ref{headgoaladjacent}--\ref{headmovetogoal}), that is, it will be there at $k=1$. The newly activated agent moves to the goal in the next time step (Algorithm~\ref{alg:ResolveConflict}, lines~\ref{algo2:goaladjacent}--\ref{algo2:goaladjacentreturn}), and this repeats for all other agents, that is, all agents will reach the goal at time $n \leq k_g + 2 (n-1)$.

    Now, let $s$ and $g$ be not adjacent. Let $w \neq s$ be the first node adjacent to the goal visited by the single agent in time step $k_g - 1$.
    Per Lemma~\ref{lemma:onlyonehead}, the current head agent $h$ is in node $w$ at time $k_g-1$, moves to the goal node in the next time step, $v_h[k_g]=g$, and send out a final directed cast along $w$ and status message (including that it is now at the goal node) before it terminates. This means that no other agent receives the head role after $k_g$.

    Let $k_a$ be the time step in which an agent $a$ moves to the goal node. Let $n_a - 1$ be the number of agents that are not yet at the goal node at time $k_a$. 
    We prove the following statement via induction: 
    \emph{If an agent $a$ reaches the goal at $k_a$, then all $n_a \leq n$ agents reach the goal in at most $k_a + 2(n_a - 1)$ time steps.}

    $n_a=1$: If $a$ reaches the goal at $k_a$, there are $n_a-1=0$ agents left on nodes other than the goal. All agents reached the goal at $k_a=k_a+2(n_a-1)$.
    $n_a \to n_a + 1$: If $a$ moved to the goal at $k_a$, there are $(n_a + 1) - 1 = n_a > 0$ agents not at $g$. None of these agents can assume the head role (Lemmas~\ref{lemma:onlyonehead} and \ref{lemma:traverseviasamenode}).
    Assume that there is an agent $b$ of these $n_a$ agents that is not a direct or indirect follower of $a$. As $b$ is a direct or indirect follower of the head (Lemma~\ref{lemma:leadershipistree}), there exists exactly one path in the directed leadership graph $\mathcal{T}_L[k_a]$ from $b$ to the head $h$. Let the set $\mathcal{B}$ include all agents that are between $b$ and $h$ on this path (especially, $a \notin \mathcal{B}$). If $\mathcal{B}=\emptyset$, $b$ is a direct follower of $h$. Let $c \in \mathcal{B} \cup \{b\}$ be the agent on this path that is the direct follower of $h$. $h$ has already been at the goal node for at least two time steps (i.e., $v_{prev_h} = g$), as it must have moved there before agent $a$ moved to $w$ at time $k_a-1$. Hence, $c$ must also be at the goal node (Lemma~\ref{lemma:leaderincommunicationrange}), as it cannot be adjacent to $g$, as $w$ was occupied by $a$ at $k_a-1$ (Lemmas~\ref{lemma:traverseviasamenode} and \ref{lemma:nocollisions}). With the same argument, $c$ must have been at the goal node since at least $k_a - 2$, and if $c \neq b$, we can follow that the direct follower of $c$ in $\mathcal{B}$ is already at the goal. Continuing this down the path to $b$, $b$ must already be at the goal node at time $k_a$. Consequently, all $n_a$ agents that are not yet at the goal node are direct or indirect followers of $a$. 
    
    This means that at time $k_a$, there exists at least one direct follower $b$ that is in a node adjacent to $w$. In the next time step $k_a+1$, $b$ aims to move to $v_{prev_a} = w$. Let $\hat{b}$ be the agent that wins against all competing agents for $w$, keeps $a$ as leader (Lemma~\ref{lemma:nocollisions}) and moves there at $k_a + 1$.
    Then, in the next time step $k_a + 2$, as the goal is adjacent, $\hat{b}$ will move to $g$ and consequently be there at time $k_a+2$ while $n_a - 1$ agents are not at the goal at $k_a + 2$. Using the induction assumption, it follows that all agents reach the goal in at most $(k_a + 2) + 2(n_a-1) = k_a + 2((n_a+1)-1)$ time steps.

    Hence, as the head agent reaches $g$ at time $k_g$, it follows that all $n$ agents reaches the goal in at most $k_g + 2(n-1)$ time steps.
\end{proof}

\begin{theorem}
    Let the full-knowledge (FK) strategy  be the strategy where agents have full prior knowledge of the environment. If the used single-agent maze solver is complete, the ratio $R(n) = {M_{Algo}(n)}/{M_{FK}(n)}$ of the makespan of the proposed algorithm, $M_{Algo}(n)$, relative to the one of the FK strategy, $M_{FK}(n)$, is either $R(n)=1$ or strictly decreasing with asymptotic equivalence, $\lim_{n \to \infty} R(n) = 1$.
\end{theorem}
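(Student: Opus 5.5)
The proof sandwiches $M_{Algo}(n)$ between an upper bound from Theorem~\ref{theorem:correctness} and a lower bound for $M_{FK}(n)$, both of the form ``distance-type constant $+\,2(n-1)$''.

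\emph{Upper bound.} Since the single-agent maze solver is complete, the single agent reaches a node adjacent to $g$ after finitely many steps. Let $k_g<\infty$ be the corresponding constant from Theorem~\ref{theorem:correctness}. It does not depend on $n$, because the head's trajectory coincides with the single agent's (Lemma~\ref{lemma:onlyonehead}). Theorem~\ref{theorem:correctness} then gives $M_{Algo}(n)\le k_g+2(n-1)$.

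\emph{Lower bound.} Let $d=d(s,g)\ge 1$ be the graph distance, so that $k_g\ge d$. Even with full knowledge, agents enter one at a time through the leaf $s$. Its unique neighbor $u$ is a non-goal node whenever $d\ge 2$.

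\begin{itemize}
\item Each agent must traverse $u$.
\item By the vertex and following conflict rules, two consecutive agents cannot occupy or enter $u$ in consecutive time steps. If agent $j$ leaves $u$ in step $k+1$, no other active agent may enter $u$ in that step. The next agent must therefore be on $s$ and active, and enter at the earliest in step $k+2$.
\item Consequently, the times at which agents first enter $u$ are spaced at least $2$ apart.
\item The last agent enters $u$ no earlier than $1+2(n-1)$ and then needs at least $d-1$ more steps.
\end{itemize}

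This yields $M_{FK}(n)\ge d+2(n-1)$ for $d\ge2$. For $d=1$ the FK strategy and the algorithm behave identically: each agent moves directly to $g$, so $M=n$ and $R(n)=1$. We should also check that FK actually achieves $d+2(n-1)$, by letting agents march along a shortest path spaced two apart. Strictly speaking, only the lower bound is needed for the limit, but exactness makes the ratio explicit.

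\emph{Conclusion.} Combining the two bounds,
\begin{equation*}
1\le R(n)\le \frac{k_g+2(n-1)}{d+2(n-1)} = 1+\frac{k_g-d}{d+2(n-1)},
\end{equation*}
which tends to $1$ as $n\to\infty$.

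\emph{Expected obstacle: monotonicity.} The claim that $R(n)$ is either identically $1$ or strictly decreasing requires exact makespans, not just bounds. Two steps are needed.

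\begin{itemize}
\item First, argue $M_{FK}(n)=d+2(n-1)$ exactly.
\item Second, argue $M_{Algo}(n)=k_g+2(n-1)$ exactly, i.e.\ the bound in Theorem~\ref{theorem:correctness} is tight. The natural route is that every agent must pass through the single goal-adjacent node $w$ (Lemma~\ref{lemma:traverseviasamenode}). The same two-step spacing argument at $w$ then forces consecutive arrivals at $g$ to be at least $2$ apart, starting from the head's arrival at $k_g$.
\end{itemize}

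With both exact, $R(n)=1+\frac{k_g-d}{d+2(n-1)}$. This equals $1$ if $k_g=d$ and is strictly decreasing in $n$ if $k_g>d$. I would first attempt the spacing argument at $w$. If exactness at $w$ fails, I would fall back to stating monotonicity for the upper-bound ratio only.
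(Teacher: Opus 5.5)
Your proposal follows the paper's proof. The shared steps are the upper bound $M_{Algo}(n)\le k_g+2(n-1)$ from Theorem~\ref{theorem:correctness}, the exact $M_{FK}(n)=k_{FK}+2(n-1)$ from two-step spacing out of the leaf $s$, the separate adjacent case with $R(n)=1$, and the explicit ratio $1+\frac{k_g-k_{FK}}{k_{FK}+2(n-1)}$.

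The paper reads the bound of Theorem~\ref{theorem:correctness} as an equality when it claims strict monotonicity, without proving tightness. Your spacing argument at $w$ closes that step, so the fallback is unnecessary and your version is more complete. It works because the head is first at $g$ at time $k_g$, every agent enters $g$ from $w$ (Lemma~\ref{lemma:traverseviasamenode}), $w\neq s$ when $s$ and $g$ are not adjacent, and Lemma~\ref{lemma:nocollisions} leaves $w$ empty right after each departure; hence arrivals at $g$ are at least two steps apart.
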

\begin{proof}
    If the used single-agent maze solver is complete, it finds a node adjacent to the goal at time $k_g-1$. Per Theorem~\ref{theorem:correctness}, $n$ agents executing Algorithm~\ref{mainalgorithm} reach the goal at time $M_{Algo}(n) = k_g+2(n-1)$ at the latest.
    
    If start and goal are adjacent, using Algorithm~\ref{mainalgorithm}, at every time step, one agent reaches the goal, and all agents will have reached it at time $M_{Algo}(n)=n$. Having full knowledge of the environment, agents executing the full-knowledge strategy also move one-by-one to the goal, such that all agents will have reached the goal at time $M_{FK}(n) = n$. Hence, $R(n) = M_{Algo}(n) / M_{FK}(n) = 1$.
    
    Now assume that $s$ and $g$ are not adjacent. 
    As $s$ is a leaf and the full-knowledge strategy needs to have no vertex or following conflicts at nodes $\neq g$, an agent can leave the start node only every two time steps. 
    This means that, even when following shortest paths, an agent can move to $g$ only every two time steps. Consequently, $M_{FK}(n) = k_{FK} + 2(n - 1)$, where $k_{FK}$ describes the length of the shortest path from $s$ to $g$, as the first agent that left $s$ reaches $g$ at $k_{FK}$ and every two time steps later, the next agent can reach the goal. If $k_g=k_{FK}$, $R(n)=1$. If $k_g > k_{FK}$, it follows $\left(\partial_n R\right)(n) = -2\frac{k_g - k_{FK}}{(k_{FK}+2(n-1))^2} < 0$ and
    \begin{equation*}
        \lim_{n \to \infty} R(n) = \lim_{n \to \infty} \frac{k_g+2(n-1)}{k_{FK} + 2(n - 1)} = \lim_{n \to \infty} 
        \frac{k_g/(n-1)+2}{k_{FK}/(n - 1) + 2} = 1.
    \end{equation*}
    Hence, the makespan of Algorithm~\ref{mainalgorithm} is either equal or asymptotically equivalent to the one of the full-knowledge strategy with respect to 
    $n$.
\end{proof}

\subsection{Time and Space Complexity Analysis}
\label{subsec:complexity}
Let $d \coloneq \deg \mathcal{G}$ be the maximum degree of the graph.

\emph{Space Complexity.} In the worst case, an agent $a$ is at a node with degree $d$, and all adjacent nodes are unoccupied, have degree $d$ and have active agents occupying adjacent nodes.
This results in at most $\min(n, d(d-1))$ active agents in communication range of $a$, and requires $\mathcal{O}(\min(n, d^2))$ space to store $\mathcal{C}$ and information sent by nearby agents. Agents must also maintain the \texttt{NodesTowards} list for each agent within communication range. In the worst case, an agent in a non-adjacent node sends a broadcast that reaches $a$ via all $d$ adjacent nodes, 
resulting in a worst-case size of $\mathcal{O}(d)$ for any \texttt{NodesTowards}. Agents also require $\mathcal{O}(d)$ space for $\mathcal{N}^{\text{occupied}}$, and $\mathcal{O}(1)$ for any other variables. Our proposed algorithm therefore requires $\mathcal{O}(d\min(n,d^2))$ space. Additional spatial requirements depend on the single-agent maze solver selected (e.g., BFS has a requirement of $\mathcal{O}(\big| \mathcal{V}\big| )$~\cite{even2011graph}, resulting in $\mathcal{O}(\big| \mathcal{V} \big| + d\min(n, d^2))$, whereas Tr\'emaux's algorithm without marking the environment requires $\mathcal{O}(\big| \mathcal{E}\big| )$~\cite{even2011graph}, giving a total of $\mathcal{O}(\big| \mathcal{E} \big| + d\min(n, d^2))$).

\emph{Time Complexity.} Each iteration of the algorithm has a constant number of function calls that have at maximum a
runtime of $\mathcal{O}(\min(n, d^2))$. 
Per Theorem~\ref{theorem:correctness}, the overall number of iterations depends on the single-agent maze solver and number of agents. Trémaux's algorithm takes $\mathcal{O}(\big| \mathcal{E} \big|)$ as all edges are traversed at most twice~\cite{even2011graph}, BFS needs $\mathcal{O}((\big| \mathcal{V} \big| + \big| \mathcal{E} \big|)^2)$~\cite{even2011graph} as a physical agent needs to backtrack at every visited node, and a uniform random walk is expected to run in $\mathcal{O}(\big| \mathcal{V} \big|^3)$~\cite{randomwalkruntime}.
In total, the algorithm with Tr\'emaux's algorithm needs $\mathcal{O}((\big| \mathcal{E} \big|+n)\min(n, d^2))$, with BFS $\mathcal{O}(((\big| \mathcal{V} \big| + \big| \mathcal{E} \big|)^2+n)\min(n, d^2))$ and with a uniform random walk an expected $\mathcal{O}((\big| \mathcal{V} \big|^3 + n) \min(n, d^2))$.

\section{Results}
\label{sec:results}
\begin{figure}[t]
    \centering
    \includegraphics[width=0.9\linewidth]{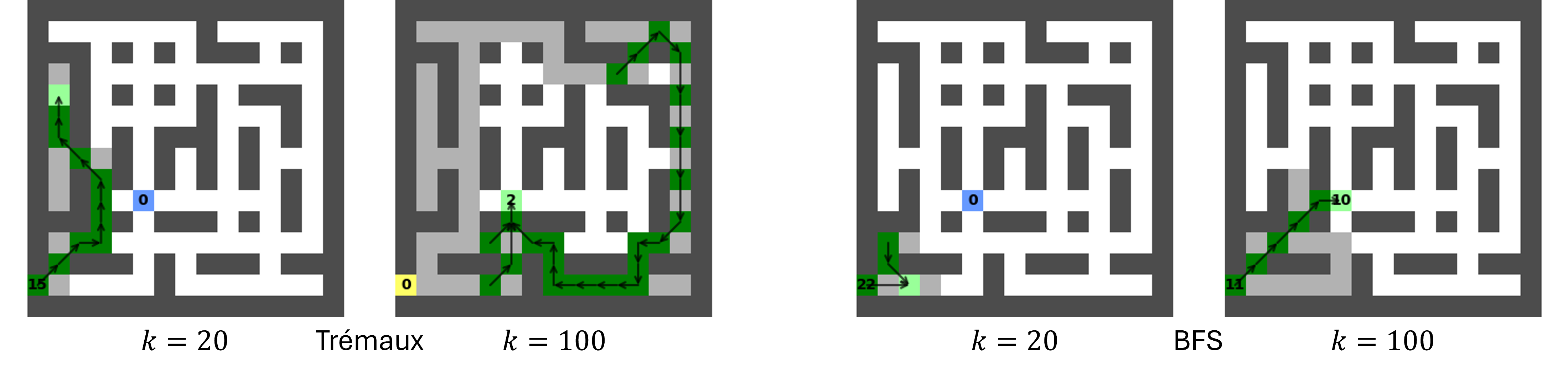}
    \caption{Simulations in grid mazes using Tr\'emaux's algorithm and BFS at times $k=20$ and $k=100$. Walls are dark gray, and nodes that were already visited are light gray. In this example, Tr\'emaux's algorithm explores a larger portion of the graph, though BFS is the first to reach the goal.}
    \label{fig:simulations}
\end{figure}

Our algorithm (hereafter, MAMT) requires a single-agent maze solver to determine head movement. We evaluated MAMT with three such solvers:
Tr\'emaux's algorithm, which explores the graph in a DFS-manner, ensuring no edge is visited more than twice; BFS, which explores the graph with increasing distance from the start node; and a uniform random walk. Simulations were conducted in square grid mazes (see Fig.~\ref{fig:simulations}).
For every combination of maze size $l \times l,l \in \{5, 15, 25, 35\}$ and number of agents $n \in \left\{1, 5, 25, 125, 625\right\}$, a same set of 20 random mazes were simulated, with a $10^4$ steps timeout. Fig.~\ref{fig:singleagentmazesolverevaluation} shows the makespan and sum-of-fuels performance. All trials were collision-free, and, for MAMT with Tr\'emaux's algorithm or BFS, all agents reached the goal. In ten trials (two for $l=25$ and eight for $l=35)$, MAMT with random walk timed out.
Although in contrast to BFS, Tr\'emaux’s algorithm does not necessarily find a shortest path, its performance was superior to that of BFS.
This result arises because Tr\'emaux’s algorithm backtracks less extensively than BFS.
As the group size increases, both makespan and average sum-of-fuels showed a tendency of approaching the mean performance of a full-knowledge strategy (i.e., omniscient agents, following a shortest path). 
Once the goal is found by the first agent, further agents follow at a steady rate, 
reducing average sum-of-fuel and the increase in makespan as the number of agents grows.

\begin{figure}[t]
    \centering
    \includegraphics[width=\linewidth]{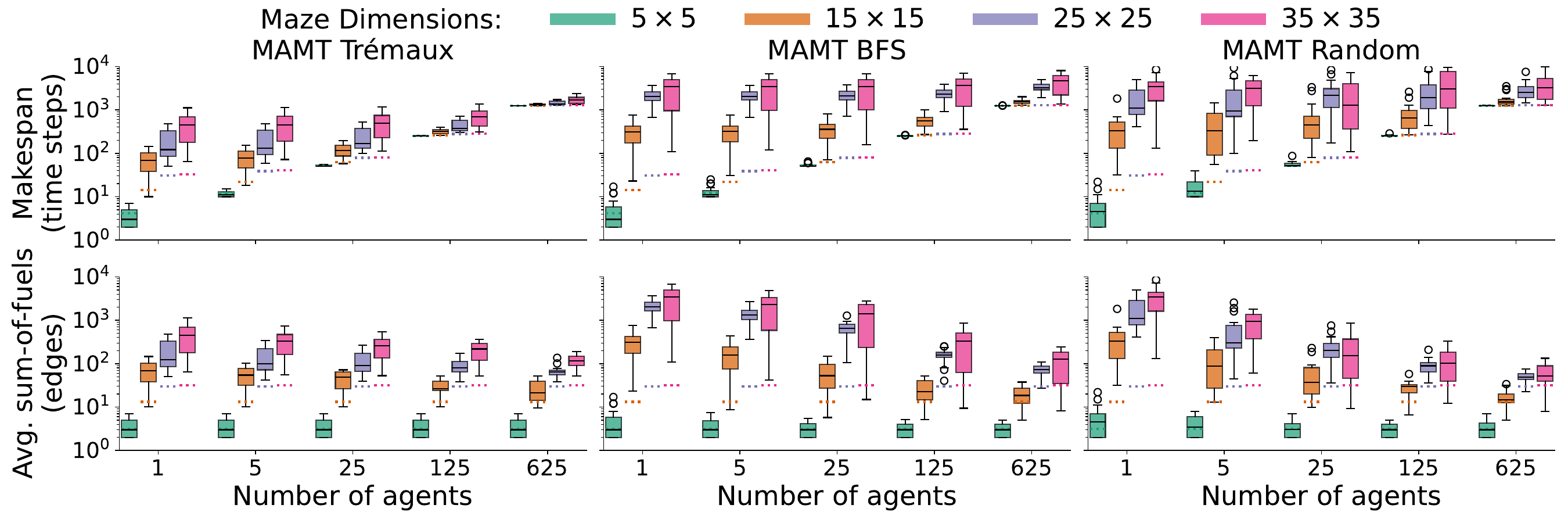}
    \caption{Makespan and average sum-of-fuel for the proposed algorithm using Tr\'emaux's algorithm, BFS, and uniform random walk 
    across various mazes and number of agents. 
    Dashed lines indicate the mean values for agents following the shortest path.}
    \label{fig:singleagentmazesolverevaluation}
\end{figure}

We evaluate MAMT against the full-knowledge strategy and a naïve baseline. For the latter, every agent $a$ independently runs the single agent maze solver to determine its next node $x_a$. The naïve baseline uses the following rules:
(i) If $x_a$ is currently unoccupied and no other agents want to move there, $a$ moves to this node;
    (ii) if $x_a$ is currently unoccupied and at least one other agent wants to move there, the agent of minimum $ID$ moves there while other competing agents wait;
    (iii) A \emph{target-cycle} occurs if it exists a subset of agents $\mathcal{B} \subseteq \mathcal{A}$ and a permutation $\pi: \mathcal{B} \mapsto \mathcal{B}$ with $x_b = v_{\pi(b)}$ $\forall b \in \mathcal{B}$. If such a target-cycle occurs and $a$ is part of it, it forwards its single-agent maze solver instance to the agent that occupies its target node (and receives an instance of another agent);
    (iv)
    if $x_a$ is currently occupied and $a$ is not part of a target-cycle, it waits.
\begin{figure}[t]
    \centering
    \includegraphics[width=\linewidth]{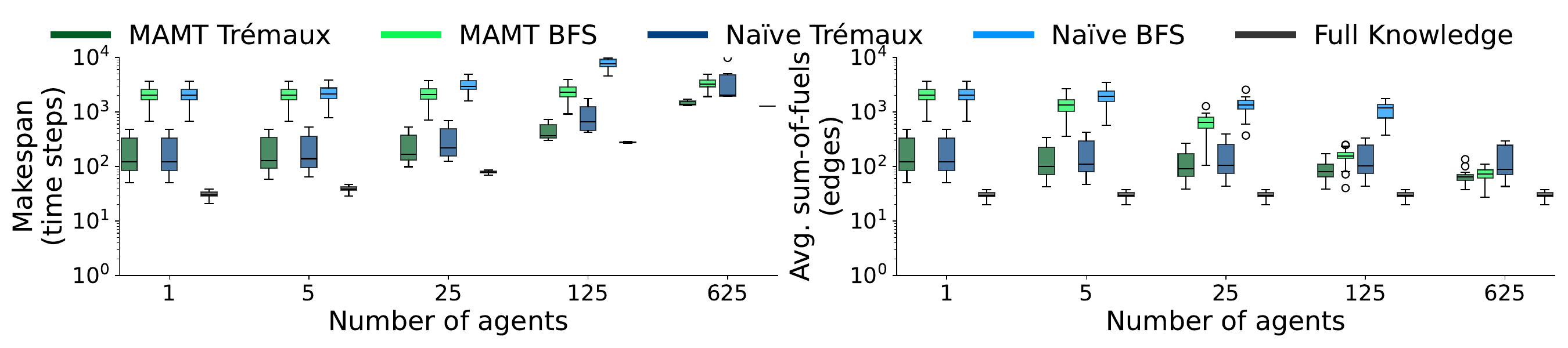}
    \caption{
    Comparison of makespan and average sum-of-fuel for MAMT and naïve strategies (each using the Tr\'emaux’s algorithm and BFS) against the full-knowledge strategy.
    Trials were conducted for up to $625$ agents in $20$ mazes of size $25 \times 25$. Trials that triggered the timeout of $10^4$ time steps were excluded.}
    \label{fig:evaluationvsnaive}
\end{figure}
Trials were performed in mazes of size $25 \times 25$ with the remaining setup unchanged.
The results are shown in Fig.~\ref{fig:evaluationvsnaive}.
The MAMT variants succeeded in all trials.
For 125 agents, the naïve strategy using BFS timed out in 9 trials. For 625 agents, the naïve strategy with Tr\'emaux’s algorithm timed out in 2 trials, whereas the BFS-based variant timed out in all 20 trials
. As before, the superior performance between Tr\'emaux’s algorithm compared to BFS is highly visible. The results show that with growing agent group size and the same underlying single-agent maze-solver, MAMT outperforms the naïve strategy in both makespan and average sum-of-fuels. 
They suggest that MAMT approaches in makespan
the full-knowledge strategy as number of agents increases.
The source code is available at \cite{RauGit}. 

\section{Conclusion}
\label{sec:conclusion}
We present an algorithm for multi-agent maze traversal, extending~\cite{Jahir} to general graphs. 
We prove that if the underlying single-agent maze solver is complete, the algorithm enables all agents to reach the undisclosed goal in finite time. We prove that achieved makespan is asymptotically equivalent to that achieved by optimal, omniscient agents. 
We show that MAMT outperforms a naïve baseline where all agents use a single-agent maze solver. We analyze the efficacy of several single-agent search algorithms when used by MAMT. Future work includes real-world experiments~\cite{Jahir} and robustness to communication failures.

\begin{credits}
\subsubsection{\ackname} 
This work was funded by BMFTR (Robotics Institute Germany; grant 16ME1001); EU Horizon Europe Framework Programme (“OpenSwarm”; grant 101093046); LOEWE center emergenCITY [LOEWE/1/12/519/03/05.001(0016)/72].
ChatGPT (OpenAI, Free Plan) assisted generation of code used for generating maze environments for simulation purposes, and code to plot Figures~\ref{fig:simulations}--\ref{fig:evaluationvsnaive}. It was also used for improving the language of the manuscript. It was not used in the design or implementation of the presented algorithm, baseline methods, or proofs.
\subsubsection{\discintname}
The authors have no competing interests to declare.
\end{credits}
%
%
\clearpage
\bibliographystyle{splncs04}
\bibliography{references}

\clearpage
\appendix
\section{Proofs of Lemmas~\ref{lemma:leaderincommunicationrange}--\ref{lemma:leadershipistree}}
\label{sec:proofs}
In the following, the full proofs of Lemmas~\ref{lemma:leaderincommunicationrange}--\ref{lemma:leadershipistree} are presented.

\subsection{Proof of Lemma~\ref{lemma:leaderincommunicationrange}}
\begin{proof}
     Proof by induction. 
    At $k=0$, there is only one active agent. It assumes the head role and the lemma is true. Let $k_0$ be the first discrete time where the head is not in the start node anymore, meaning it left the start node in time step $k_0$ and is consequently in an adjacent node. As soon as the head left the start node, a new agent $a$ that is not the head became active. As this agent then received a status message from the head, $a$ chose the head as its leader. As the head is adjacent, $a$ has its leader in communication range.

    $k \to k+1$: If $a$ was just activated at the start node in time step $k+1$, it has $L_a$ in communication range (same argument as above) and $v_{L_a}[k+1]$ is adjacent. 
    Let $h$ be the head agent at time $k$. 
    If it remains the head in time step $k+1$, it has no leader at $k+1$. Otherwise,
    there was another active agent $L$ that occupied an adjacent node of $h$ at $k$. $L$ was then chosen to be the new head and $h$'s leader. As $h$ did not leave its current node, $v_h[k] = v_h[k+1]$. In the same time step, $L$ received the head message stating that $L$ should assume the head role, and consequently did, remaining in its current node, $v_L[k] = v_L[k+1]$. Hence, $L$ is adjacent to $h$ and in $h$'s communication range at time $k+1$. 

    Let $a$ be an active agent that does not assume the head role in time step $k+1$ and was activated in an earlier time step. Per induction assumption, $a$ had its leader ${L_a}$ in communication range and at least one of the three cases is true at $k$.
    If $v_a = v_L = g$, neither $a$ nor $L$ will move in time step $k+1$ and the lemma is true for $k+1$.
    
    Let $v_a \neq g$ at $k$. As $a$ is not the head, it will execute 
    Algorithm~\ref{alg:ResolveConflict}.
    If the head $h$ is not yet at the goal and in communication range of $a$, $a$ will wait 
    for the head message from $h$. As $a$ will not assume the head role, it will check if it received the head's directed cast or if the head is already in an adjacent node and not $a$'s current leader. If so, $a$ chooses $h$ as its new leader and remains in its current node. 
    If the directed cast was received by $a$, $D^{\text{solver}}$ is an adjacent node of $v_a$ where the head will move to in this time step. 
    If the head is already in an adjacent node $v_h[k]$, it can only move to a node that is at most two edges away from $a$. 
    In this case, no other agent will move to $v_h[k]$ in time step $k+1$ (Lemma~\ref{lemma:nocollisions}).
    Hence, $a$'s new leader $h$ is adjacent to $a$ or two nodes away with the unoccupied node $v_{prev_h}[k+1]=v_h[k]$ adjacent to $v_a[k+1]$ and $v_h[k+1]$, meaning that $h \in \mathcal{C}_a$ at time $k+1$.
    If the head is not in communication range or already at the goal, or no directed cast was received and (i) the head is not adjacent or (ii) already $a$'s leader,
    $a$ checks if the goal is adjacent. If so, the head has been at the goal for multiple time steps (Lemmas~\ref{lemma:traverseviasamenode} and \ref{lemma:nocollisions}) and $a$ moves there in timestep $k+1$. 
    As $a$ must have moved to this adjacent node of the goal in the time step $k$ (else it would have already moved to the goal in a previous time step), $a$ followed its leader $L_a$ there, meaning that $L_a$ already resides at the goal, i.e., $v_a[k+1]=v_{L_a}[k+1]=g$.
    If the goal is not adjacent to $a$, $a$ checks if $L_a$ resides in an adjacent node at time $k$. If so, $a$ will keep $L_a$ as leader and not move. Either $L_a$ also remains at its current node (i.e., at time $k+1$ $a$ is adjacent to its leader $L_a$) or it moves to an adjacent node and no other agent moves to $v_{L_a}[k]$ (Lemma~\ref{lemma:nocollisions}) meaning that $L_a$ is in communication range and its previous node $v^{prev}_L[k+1]=v_L[k]$ is unoccupied and adjacent to $a$ and $L_a$ at time $k+1$.
    
    If none of the above applies (i.e., at time $k$ the leader is two nodes away from $a$ and the leader's previous node $v^{prev}_{L_a}[k]$ is unoccupied and adjacent to $v_a[k]$ and $v_{L_a}[k]$), $a$ calculates all competing agents for $v^{prev}_{L_a}[k]$ (Lemma~\ref{lemma:nocollisions}). If $a$ is chosen as the winner, it keeps its leader and moves to $v^{prev}_{L_a}[k]$. As the leader remains in its current node or moves to an adjacent node, at time $k+1$, either $L_a$ is adjacent to $a$ or two nodes away with $v^{prev}_{L_a}[k+1]=v_{L_a}[k]$ which is adjacent to $a$ (as $v_a[k+1] = v^{prev}_{L_a}[k]$) and no other agent moves to $v_{L_a}[k]$ (Lemma~\ref{lemma:nocollisions}), i.e., $L$ is in communication range. If $a$ is not selected as winner, it remains in its current node and choses the winning agent $b$ (which moves to $v^{prev}_{L_a}[k]$) as new leader. hence, at time $k+1$, $a$ is adjacent to its new leader $b$ and $b$ is in $a$'s communication range.
\end{proof}

\subsection{Proof of Lemma~\ref{lemma:leadershipistree}}

\begin{proof}
    Per Lemma~\ref{lemma:onlyonehead}, at all discrete times $k$ only one agent has no leader. 
    Every other agent has one leader, meaning that $\big|\mathcal{E}_L[k]\big| = \big|\mathcal{A}^{\text{active}}[k]\big| - 1$. Hence, if $\mathcal{T}_L[k]$ is connected, the underlying undirected graph must be a tree~\cite{mehlhorn2008algorithms}, i.e. there is never more than one path from any other node to the root node.
    
    Proof by induction. At $k=0$, only one agent is active and assumes the head role, the lemma is true. Let $k_0$ be as in Lemma~\ref{lemma:leaderincommunicationrange}; after the head leaves $s$, the newly activated agent chooses the head as its leader. 
    As the head moved to an adjacent node of $s$ in time step $k_0$, it will not choose the newly activated agent as leader (Lemma~\ref{lemma:onlyonehead}) and the lemma is true.
    
    $k \to k+1$: Assume that at time $k$, $\mathcal{T}_L[k]$ is an in-tree.
    There are four cases where an agent can change its leader in time step $k+1$. We interpret every case as operation on $\mathcal{T}_L[k]$ and show that none change the in-tree property: 
    (i) The head chooses another agent $b$ to be the new head and its leader (Algorithm~\ref{mainalgorithm} line~\ref{algo:choosenewhead}).
    (ii) A non-head agent $a$ chooses the current head as its new leader (Algorithm~\ref{alg:ResolveConflict} line~\ref{algo2:headadjacentorunicast}).
    (iii) A non-head agent $a$ competes with other non-head agents for the previous node of its leader and chooses the winner $b$ as new leader (Algorithm~\ref{alg:ResolveConflict} line~\ref{alg2:updateL}).
    (iv) An inactive agent $a$ activates at the start node $s$ and chooses the agent $b$ that just left $s$ as its leader.

    Cases (i) and (ii) can happen simultaneously at some time $k+ \epsilon$, $\epsilon \in (0,1)$.
    In case (i), the edge $(h,b)$ was added to $\mathcal{E}_L$ before $h$ sent out the head-message. $b$ will set its leader to $\texttt{nil}$ at $k+\epsilon$ after receiving the head-message, removing the edge $(b,L_b)$ from $b$ to its old leader $L_b$.
    Hence, $b$ has no outgoing edges anymore and becomes the new root. 
    For an agent $a \in \mathcal{A}^{\text{active}}[k]\setminus \{b,h\}$ that was a 
    direct or indirect follower
    from $b$ at time $k$, i.e., the unique path to $h$ was $P(a,h)[k] = P(a,b)[k] \cup \{(b,L_b)\} \cup P(L_b,h)[k]$, $P(a,b)[k]=P(a,b)[k+\epsilon]$ was not changed and is now the unique path to the new head $b$.
    For $h$, the unique path to the new head $b$ is defined by $P(h,b)[k+\epsilon] = \{(h,b)\}$.
    For an agent $a\neq b,h$ that was a descendant~\cite{mehlhorn2008algorithms} from $h$ but not $b$ at time $k$, i.e., $(b,L_b) \notin P(a,h)[k]$, the unique path to the new head $b$ results from $P(a,b)[k] = P(a,h) \cup \{(h,b)\}$.
    In case (ii), the edge from $a$ to its old leader is removed and the edge $(a,h)$ is added. If $h$ keeps the head role, the unique path from $a$ to $h$ evaluates to $P(a,h)[k+\epsilon]=\{(a,h)\}$. If $b$ is chosen as new head, the path to $b$ results to $P(a,b)[k+\epsilon]=\{(a,h),(h,b)\}$ (see (i)).
    Hence, $\mathcal{T}_L[k+\epsilon]$ is an in-tree. 
    
    At a later time $k+\delta$, $\delta \in (\epsilon,1)$ case (iii) may occur.
    In case (iii), as $b$ will keep its current leader in time step $k+1$ (Algorithm~\ref{alg:ResolveConflict} line~\ref{alg2:keepL}, Lemma~\ref{lemma:nocollisions}), the unique path to the current head $h$ results from $P(a,h)[k+\delta] = \{(a,b)\} \cup P(b,h)[k+\delta]$.
    Even if the same update occurs along the leader chain of $b$, with the same argumentation there will be a unique path $P(b,h)[k+\delta]$ from $b$ to $h$, i.e., $\mathcal{T}_L[k+\delta]$ is an in-tree.

    After moving, at time $k + \gamma$, $\gamma \in (\delta,1)$ case (iv) may occur and $\mathcal{A}^{\text{active}}[k+\gamma] = \mathcal{A}^{\text{active}}[k+\delta]\cup \{a\}$. As agent $a$ chooses $b$ as its leader, i.e., adding edge $(a,b)$, the unique path to the current head $h$ is defined by $P(a,h)[k+\gamma]=\{(a,b)\} \cup P(b,h)[k+\gamma]$.
    As no other leadership changes happen, at $k+1$, $\mathcal{T}_L[k+1] = \mathcal{T}_L[k+\gamma]$ is an in-tree.
\end{proof}
\end{document}